\newcommand{\om}{\textsc{OneMax}\xspace}
\newcommand{\onemax}{\om}
\newcommand{\LO}{\textsc{Leading\-Ones}\xspace}
\newcommand{\leadingones}{\LO}
\newcommand{\LeadingOnes}{\LO}
\newcommand{\PLeadingOnes}{\textsc{PLeading\-Ones}\xspace}
\newcommand{\binval}{\textsc{BinVal}\xspace}
\newcommand{\jump}{\textsc{Jump}\xspace}
\newcommand{\Pjump}{\textsc{PJump}\xspace}
\newcommand{\PJump}{\textsc{PJump}\xspace}
\newcommand{\trap}{\textsc{Trap}\xspace}
\newcommand{\HAM}{\textsc{Ham}\xspace}
\newcommand{\oea}{\mbox{${(1 + 1)}$~EA}\xspace}
\newcommand{\opllga}{\mbox{${(1+(\lambda,\lambda))}$~GA}\xspace}
\newcommand{\ollga}{\opllga}
\DeclareMathOperator{\pow}{Pow}
\DeclareMathOperator{\Poi}{Poi}
\DeclareMathOperator{\Id}{Id}
\newcommand{\olsi}[1]{\,\overline{\!{#1}}}
\newcommand{\R}{\ensuremath{\mathbb{R}}}
\newcommand{\Z}{\ensuremath{\mathbb{Z}}}
\let\originalleft\left
\let\originalright\right
\renewcommand{\left}{\mathopen{}\mathclose\bgroup\originalleft}
\renewcommand{\right}{\aftergroup\egroup\originalright}
\begin{document}

\title[Towards a Stronger Theory for Permutation-based Evolutionary Algorithms]{Towards a Stronger Theory for Permutation-based Evolutionary Algorithms}



\author{Benjamin Doerr}
\affiliation{%
  \institution{Laboratoire d'Informatique (LIX), CNRS, \'Ecole Polytechnique,\\ Institut Polytechnique de Paris}
  \city{Palaiseau}
  \country{France}
}  

\author{Yassine Ghannane}
\affiliation{%
  \institution{\'Ecole Polytechnique,\\ Institut Polytechnique de Paris}
  \city{Palaiseau}
  \country{France}
  }

\author{Marouane Ibn Brahim}
\affiliation{%
  \institution{\'Ecole Polytechnique,\\ Institut Polytechnique de Paris}
  \city{Palaiseau}
  \country{France}
  }

{\sloppy
\begin{abstract}
While the theoretical analysis of evolutionary algorithms (EAs) has made significant progress for pseudo-Boolean optimization problems in the last 25 years, only sporadic theoretical results exist on how EAs solve permutation-based problems.

To overcome the lack of permutation-based benchmark problems, we propose a general way to transfer the classic pseudo-Boolean benchmarks into benchmarks defined on sets of permutations. We then conduct a rigorous runtime analysis of the permutation-based $(1+1)$ EA proposed by Scharnow, Tinnefeld, and Wegener (2004) on the analogues of the \textsc{LeadingOnes} and \textsc{Jump} benchmarks. The latter shows that, different from bit-strings, it is not only the Hamming distance that determines how difficult it is to mutate a permutation $\sigma$ into another one $\tau$, but also the precise cycle structure of $\sigma \tau^{-1}$. For this reason, we also regard the more symmetric scramble mutation operator. We observe that it not only leads to simpler proofs, but also reduces the runtime on jump functions with odd jump size by a factor of $\Theta(n)$. Finally, we show that a heavy-tailed version of the scramble operator, as in the bit-string case, leads to a speed-up of order $m^{\Theta(m)}$ on jump functions with jump size~$m$.%
\end{abstract}

\begin{CCSXML}
<ccs2012>
<concept>
<concept_id>10003752.10010070.10011796</concept_id>
<concept_desc>Theory of computation~Theory of randomized search heuristics</concept_desc>
<concept_significance>500</concept_significance>
</concept>
</ccs2012>
\end{CCSXML}

\ccsdesc[500]{Theory of computation~Theory of randomized search heuristics}

\keywords{Runtime analysis, permutation search spaces, mutation, theory.}


\maketitle
\section{Introduction}

Mathematical runtime analyses have raised our understanding of evolutionary algorithms for many years now (see~\cite{DrosteJW02} for an early, very influential work in this field). They have explained their working principles, have given advice on how to set their parameters, and have even lead to the development of new operators and algorithms. 

A closer look at these results~\cite{NeumannW10,AugerD11,Jansen13,DoerrN20}, however, reveals that the vast majority of these works only consider bit-string representations, that is, the search space is the space $\Omega = \{0,1\}^n$ of bit strings of length~$n$. Hence for the practically also relevant case of permutation-based optimization problems~(see, e.g.,~\cite{EibenS15}), that is, the search space is the set $S_n$ of permutations of $[1..n] := \{1, \dots, n\}$, our rigorous understanding is much less developed (see Section~\ref{sec:previous} for a detailed account of the state of the art). This shortage is visible, e.g., from the fact that there are no established benchmark problems except for the sorting problem and there are no mathematical results on how to set the parameters of permutation-based evolutionary algorithms. 

With this work, we aim at contributing to the foundations of a systematic and principled analysis of permutation-based evolutionary algorithms. Noting that the theory of evolutionary algorithms for bit-string representations has massively profited from the existence of widely accepted and well-understood benchmarks such as \onemax, \binval, linear functions, \leadingones, royal road functions, \trap, \jump, and many others, we first propose a simple generic way to translate benchmarks defined on bit strings into permutation-based benchmarks. 

Since the resulting permutation-based \onemax problem is equivalent to a sorting problem regarded in~\cite{ScharnowTW04}, we proceed with mathematical runtime analyses of the two next most prominent benchmarks \leadingones~\cite{Rudolph97} and \jump~\cite{DrosteJW02}. As algorithm, we consider the permutation-based \oea of~\cite{ScharnowTW04} performing as mutation a Poisson-distributed number of swaps (called exchanges in~\cite{ScharnowTW04}). 

For \leadingones, without greater problems, we prove an upper bound via fitness level arguments analogous to~\cite{Rudolph97} and a lower bound via the observation that, different from the bit-string case, it is unlikely to gain more than two fitness levels while the fitness is below~$\frac n2$. This observation saves us from counting so-called free-riders as in~\cite{DrosteJW02}. The final result is a $\Theta(n^3)$ runtime guarantee for the permutation-based \oea on this \leadingones benchmark. Given that the probability of a fitness improvement in the permutation-based case is $\Theta(n^{-2})$ (as opposed to $\Theta(n^{-1})$ in the bit-string case), this runtime estimate, higher by a factor of $\Theta(n)$ than for the bit-string case, is very natural. 

Our analysis for jump functions, in contrast, reveals a subtle difference to the bit-string case. Similar to the bit-string case, also in the optimization of a permutation-based jump function, the most difficult step is to mutate a local optimum into the global optimum, which is the only improving solution here. This requires flipping $m$ particular bits in the bit-string case and permuting $m$ particular elements in the permutation-case, where $m$ is the jump size parameter of the jump function. Different from the bit-string case, the probability that one application of the mutation operator achieves this goal depends critically on the current permutation, more precisely, on its cycle structure. Consequently, the success probability for this event can be as low as $\Theta(n^{-2(m-1)})$ and as high as $\Theta(n^{-2 \lceil m/2 \rceil})$. By analyzing the random walk on the plateau formed by the local optima, we manage to show a runtime guarantee of only $\Theta(n^{2 \lceil m/2 \rceil})$, but this analysis is definitely more involved than for the bit-string case. 

Both from the complicated analysis and the slightly odd result that jump functions with jump size $m$ and $m+1$, $m$ odd, have the same asymptotic optimization time, we were led to wonder if the mutation operator regarded in~\cite{ScharnowTW04} is really the most appropriate one. We therefore also considered a variant of the scramble mutation operator, which randomly permutes a subset of the ground set. To be comparable with the previous operator, we choose again a number $k$ from a Poisson distribution with expectation $\lambda = 1$, then choose a random set of $k$ elements from the ground set $[1..n]$, and randomly permute these in our given permutation. For this operator, we prove that the runtime of the \oea on jump functions with jump size $m$ becomes $\Theta(n^m)$ regardless of the parity of~$m$, hence a factor of $\Theta(n)$ less when $m$ is odd. Both from the more natural result and the easier proof, we would speculate that this is a superior way of performing mutation on permutation spaces. 

Finally, we analyze the performance of a heavy-tailed variant of the scramble mutation operator. For bit-string representations, it was observed in~\cite{DoerrLMN17} that heavy-tailed mutation operators, and more generally heavy-tailed parameter choices~\cite{AntipovBD21gecco}, can greatly speed up the runtime of evolutionary algorithms. In particular, on jump functions with gap size $m$ the \oea with a heavy-tailed mutation rate was shown to be by a factor of $m^{\Theta(m)}$ faster than with the standard mutation rate $\frac 1n$. We show the same result for permutation-based jump functions: Choosing the number $k$ in the scramble operator not according to a Poisson distribution with expectation $\lambda=1$, but from a power-law distribution on $[1..n]$, gives a speed-up of order $m^{\Theta(m)}$.

In summary, our results on the \leadingones and \jump benchmarks show that several arguments and methods from the bit-string world can easily be extended to permutation search spaces, however, the combinatorially richer structure of the set of permutations also leads to new challenges and new research problem such as what is the best way to perform mutation. From our results on \jump functions, we would rather suggest to use scramble mutations than swap mutations, and rather with a heavy-tailed mutation strength than with a Poisson distributed one. We hope that our general way to translate bit-string benchmarks into permutation-based benchmarks eases the future development of the mathematical analysis of permutation-based evolutionary algorithms, a subfield where, different from bit-string representations, many fundamental questions have not yet been studied under a theoretical perspective.

\section{Previous Work}\label{sec:previous}

In this section, we describe the most relevant previous works. For reasons of space, we only concentrate on runtime analysis works, knowing well that other theoretical aspects have been studied for permutation problems as well. Since the theory of evolutionary algorithms using bit-string representations has started with and greatly profited from the analysis how simple EAs optimize polynomial-time solvable problems, we mostly focus on such results.

To the best of our knowledge, the first mathematical runtime analysis for a permutation-based problem is the study of how the \oea can be used to sort an array of $n$ elements, which is formulated at the optimization problem of maximizing the sortedness of a permutation~\cite{ScharnowTW04}. In that work, several mutation operators are proposed for permutations. Imitating the classic bit-wise mutation operator with mutation rate $\frac 1n$, which flips a number of bits that asymptotically follows a Poisson law with expectation $\lambda = 1$, a random number $k$ is chosen according to such a Poisson law and then $k+1$ elementary mutations are performed\footnote{The change from the natural value $k$ to $k+1$ was done in~\cite{ScharnowTW04} because for the problems regarded there, a mutation operation that returns the parent, that is, the application of $k=0$ elementary mutations, cannot be profitable. It is easy to see, however, that all results in~\cite{ScharnowTW04} remain valid when using $k$ elementary mutations as mutation operator.}. As elementary mutations, exchanges of two neighboring elements (called ``swap'' in~\cite{ScharnowTW04}), exchanges two arbitrary elements (called ``exchange'' in~\cite{ScharnowTW04}, but ``swap'' in the textbook~\cite{EibenS15}), jumps and reversals were proposed. Since the majority of the results in~\cite{ScharnowTW04} concern exchange mutations, we shall only discuss these here. We shall adopt the language of~\cite{EibenS15} though and call these ``swaps''. A swap thus swaps two random different elements in the word notation of a permutation, or, equivalently, replaces the current permutation $\sigma$ by $\tau \circ \sigma$, where $\tau$ is a random transposition ($2$-cycle) on the ground set $[1..n]$.

We omit the results for some measures of sortedness and only state the result most relevant for our work, namely that if the sortedness is measured by the number of items that are placed correctly, that is, the fitness is $\HAM(\sigma) = |\{i \in [1..n] \mid \sigma(i) = i\}|$, then the \oea with swap-based mutation operator takes an expected number of $\Theta(n^2 \log n)$ iterations to sort a random permutation. 

The seminal work~\cite{ScharnowTW04} has seen surprisingly little follow-up work on permutation-based EAs. There is a second early work on sorting~\cite{DoerrH08} regarding a tree-based representation and a series of works on how the choice of the (problem-specific) mutation operator influences the complexity of computing Eulerian cycles~\cite{Neumann08,DoerrHN07,DoerrKS07,DoerrJ07gecco}. In~\cite{CorusDEL18}, the sorting problem appears in one of several applications of the level-based method to analyze non-elitist algorithms. In~\cite{GavenciakGL19}, sorting via swaps in the presence of noise is investigated. Finally, in~\cite{BassinB20} it is discussed how to adjust the \ollga to permutation spaces and then an $O(n^2)$ runtime of the resulting algorithm on the sorting problem with $\HAM$ fitness is proven. Slightly less related to the focus of this work, there is an interesting a sequence of results on how EAs optimize NP-hard variants of the travelling salesman problem (TSP) in the parameterized complexity paradigm~\cite{CorusLNP16, SuttonN12, SuttonNN14}, works on finding diverse sets of TSP solutions~\cite{DoBNN20,DoGNN21}, a fixed-budget analysis for the TSP~\cite{NallaperumaNS17}, and a result on how particle swarm algorithms solve the sorting problem~\cite{MuhlenthalerRSW21}.

In summary, there are a few runtime analyses for permutation search spaces, however much fewer than for bit-string representations and strongly concentrated on very few problems.

\section{Preliminaries: Basic Notation, Permutations, and the Permutation-based \oea}\label{sec:prelims}

In this section, we define the notation used in the remainder of the paper and we describe the permutation-based \oea from~\cite{ScharnowTW04}. 

We write $[a..b] := \{z \in \Z \mid a \le z \le b\}$ to denote the set of integers between $a$ and $b$, where $a$ and $b$ can be arbitrary real numbers. We denote the problems size of an algorithmic problem by $n$. When using asymptotic notations such as $O(\cdot)$ or $\Theta(\cdot)$, these will be with respect to $n$, that is, for $n$ tending to $\infty$.

A mapping $\sigma : [1..n] \to [1..n]$ is called \emph{permutation} (of $[1..n]$) if it is bijective. As common, we denote by $S_n$ the set of all permutations of $[1..n]$. Different from some branches of algebra and combinatorics that regard permutation groups, we use the standard composition $\circ$ of permutations: For $\sigma, \tau \in S_n$, the permutation $\tau \circ \sigma$ is defined by $(\tau \circ \sigma)(i) = \tau(\sigma(i))$ for all $i \in [1..n]$. 

We recall that there are two common notations for {permutations}. The most intuitive one is to describe the permutation $\sigma \in S_n$ via the vector (``word'') of its images, that is, we write $\sigma = (\sigma(1), \sigma(2), \dots, \sigma(n))$. 
To understand the structure of a permutation, the \emph{cycle notation} is more convenient. A \emph{cycle} of length $k$, also called $k$-cycle, is a permutation $\sigma \in S_n$ such that there are pair-wise distinct elements $i_1, \dots, i_k \in [1..n]$ such that $\sigma(i_j) = i_{j+1}$ for all $j \in [1..k-1]$, $\sigma(i_k) = i_1$, and $\sigma(i) = i$ for all $i \in [1..n] \setminus \{i_1, \dots, i_k\}$. The notation $\sigma = (i_1 \dots i_k)$ is standard for such a cycle. Two cycles $(i_1 \dots i_k)$ and $(j_1 \dots j_\ell)$ are called disjoint if they are moving different elements, that is, if $\{i_1, \dots, i_k\}$ and $\{j_1, \dots, j_\ell\}$ are disjoint sets. Every permutation can be written as composition of disjoint cycles of length at least $2$. This cycle notation is unique apart from the order of the cycles in the composition, which however is not important since disjoint cycles commute, that is, satisfy $\sigma \circ \tau = \tau \circ \sigma$. To ease the writing, the $\circ$ symbols are usually omitted in the cycle notation. For example $\sigma = (12)(345)$ is the cycle notation of the permutation $\sigma = (2,1,4,5,3)$ in word notation. We finally recall the fact that every permutation $\sigma \in S_n$ can be written as composition of (usually not disjoint) $2$-cycles (called transpositions). This writing is not unique. For a $k$-cycle $\sigma = (i_1 \dots i_k)$, a shortest way to write it as composition of transpositions uses $k-1$ transpositions, e.g., $\sigma = (i_1 i_2) \circ (i_2 i_3) \circ \dots \circ (i_{k-1} i_k)$. Consequently, a permutation that is the product of $\ell$ disjoint cycles of lengths $k_1, \dots, k_\ell$ can be written as product of $\sum_{i=1}^{\ell} (k_i - 1)$ transpositions, but not of fewer. 

We finally discuss the evolutionary algorithm (EA) considered in this study. As in most previous theoretical works, we shall regard a very simple EA. This is justified both by the fact that many questions cannot be answered for more complicated algorithms and by the fact that simple algorithms consisting essentially of only one component allow a more focused study of this component. With this reasoning, as in the classic first theory works on EAs for bit-string representations, we shall regard the \oea, which is essentially a hill-climber using a mutation operator to create new solutions. In this sense, we are following the approach of the first runtime analysis work on permutation-based EAs~\cite{ScharnowTW04}. As sketched in the introduction already, a number of different mutation operators was proposed in~\cite{ScharnowTW04}, but the most promising results were obtained by building on swap operations. We first note that if $\sigma = (i_1, \dots, i_n)$ in word notation and $\tau$ is the transposition swapping $i_k$ and $i_\ell$ (that is, $\tau = (i_k i_\ell)$ in cycle notation), then $\tau \circ \sigma = (j_1, \dots j_n)$ with $j_k = i_\ell$, $j_\ell = i_k$, and $j_a = i_a$ for all $a \in [1..n] \setminus \{i_k,i_\ell\}$. In other words, we obtain the word representation for $\tau \circ \sigma$ by swapping $i_k$ and $i_\ell$ in the word representation of $\sigma$. 

It is clear that a local mutation operator such as a single random swap is not enough to let an EA leave local optima. Noting that the classic bit-wise mutation operator for bit-string representations (that flips each bit independently with probability~$\frac 1n$) performs a number of local changes (bit-flips) that asymptotically follows a Poisson law with parameter $\lambda = 1$, the authors of~\cite{ScharnowTW04} argue that it is a good idea in the permutation-case to sample a number $k \sim \Poi(1)$ and then perform $k$ random swap operation. Since in their application mutation operations that return the parent cannot be useful, they exclude the result of zero swaps by deviating from this idea and instead performing $k+1$ random transposition. To ease the comparison with the bit-string case, we shall not follow this last idea and perform instead $k \sim \Poi(1)$ random transpositions as mutation operation. We note that in many EAs for bit-string representations, zero-bit flips cannot be profitable as well, but nevertheless the standard bit-wise mutation operator is used, which with constant probability flips no bit.

With these considerations, we arrive at the permutation-based \oea described in Algorithm~\ref{alg:oea}.
\begin{algorithm}
\caption{The permutation-based \oea for the maximization of a given function $f : S_n \to \R$. It is itentical to the one in~\cite{ScharnowTW04} except that we perform only $k$ random swaps, not $k+1$.}
\label{alg:oea}
\begin{algorithmic}[1] 
\State Choose $\sigma\in S_n$ uniformly at random
\Repeat
\State Choose $k \sim \Poi(1)$
\State Choose $k$ transpositions $T_1,T_2,...,T_k$ independently and uniformly at random
\State $\sigma'\leftarrow T_k \circ T_{k-1} \circ ... \circ T_1\circ \sigma$
\If{$f(\sigma')\geq f(\sigma)$}
\State $\sigma \leftarrow \sigma'$
\EndIf
\Until forever
\end{algorithmic}
\end{algorithm}

\section{Benchmarks for Permutation-based EAs}\label{sec:construction}

As discussed in the introduction, the theory of evolutionary computation has massively profited from having a small, but diverse set of benchmark problems. These problems are simple enough to admit mathematical runtime analyses for a broad range of algorithms including more sophisticated ones such as ant colony optimizers or estimation-of-distribution algorithms. At the same time, they cover many aspects found in real-world problems such as plateaus and local optima. Being synthetic examples, they often come with parameters that allow one to scale the desired property, say the radius of attraction of a local optimum.

Such an established and generally accepted set of benchmarks is clearly missing for permutation-based EAs, which might be one of the reasons why this part of EA theory is less developed. To overcome this shortage, and to do this in a natural and systematic manner, ideally profiting to the maximum from the work done already for EAs using bit-string representations, we now propose a simple way to transform benchmarks for pseudo-Boolean optimization into permutation-based problems. We are sure that future work on permutation-based EAs will detect the need for benchmarks which cannot be constructed in this way, but we are confident that our approach sets a good basis for a powerful sets of benchmarks for permutation-based EAs. 

We note that there are different classes of permutation-based problems. In problems of the \emph{assignment type}, we have two classes of $n$ elements and the task is to assign each member of the first class to a member of the second in a bijective fashion. The quadratic assignment problem or the stable marriage problem are examples for this type. In problems of the \emph{order type}, we have precedence relations that must be respected or that are profitable to be respected. Such problems occur in production planning, where a given set of jobs have to be scheduled on a given machine. Finally, in problems of the \emph{adjacency type}, it is important that certain items are placed right before another one (possibly in a cyclic fashion). The travelling salesman problem is the classic hard problem of this type, the Eulerian cycle problem is a polynomial-time solvable example. We note that the order and adjacency types were, also under these names, already described in~\cite[p.~68]{EibenS15}. Due to the different nature of these types of problems, it appears difficult to define benchmarks that are meaningful for all types. We therefore restrict ourselves to defining benchmarks that appear suitable for the assignment type. 

In an assignment type permutation-based problem, what counts is that each element of the first class is assigned to the right element of the second class. Without loss of generality, we may assume that both classes are equal $[1..n]$. Then each possible solution to this type of problem is described by a permutation $\sigma \in S_n$. Since the way we number the elements of the original sets is arbitrary, we can without loss of generality assume that the optimal solution is the identity permutation, that is, the $\sigma$ such that $\sigma(i) = i$ for all $i \in [1..n]$. With this setup, each permutation $\sigma \in S_n$ defines a bit-string $x(\sigma)$ which indicates which of the elements are already assigned correctly, namely the string $x(\sigma) \in \{0,1\}^n$ defined by $x(\sigma)_i = 1$ if and only if $\sigma(i) = i$. Now an arbitrary $f : \{0,1\}^n \to \R$ defines a permutation-based problem $g : S_n \to \R$ via $g(\sigma) := f(x(\sigma))$ for all $\sigma \in S_n$. 

This construction immediately defines permutation-based versions of the classic benchmarks such as \onemax, \leadingones, and \jump functions. We note that the sorting problem with the $\HAM$ fitness function regarded in~\cite{ScharnowTW04} is exactly what we obtain from applying this construction to the classic \onemax benchmark. We are not aware of any other classic benchmark for which the permutation-based variant (as constructed above) has been analyzed so far. Being the next most prominent benchmarks after \onemax, in the remainder of this work we shall conduct a mathematical runtime analysis for the permutation variants of the \leadingones and \jump benchmarks. 

\section{Runtime Analysis for the Permutation-\LeadingOnes Benchmark}\label{sec:LO}

We start our runtime analysis work for permutation-based EAs with an analysis of the runtime of the \oea on the permutation version of the \leadingones benchmark. 

\subsection{Definition of the Problem}

The classic \textsc{LeadingOnes} benchmark on bit-strings was defined by Rudolph~\cite{Rudolph97} as an example for a unimodal function that is harder for typical EAs than \onemax, but still unimodal. The \leadingones functions counts the number of successive ones from left to right, that is, we have \[\LeadingOnes(x)\coloneqq
\max\{i \in [0..n] \mid \forall j \in [1..i] : x_j = 1\}\]
for all $x = (x_1,...,x_n) \in \{0,1 \}^n$. 

\LeadingOnes has quickly become an intensively studied benchmark in evolutionary computation. The \oea optimizes \LeadingOnes in quadratic time, as has been shown in~\cite{Rudolph97} (upper bound) and~\cite{DrosteJW02} (lower bound).

From our general construction principle for permutation-based benchmarks proposed in Section~\ref{sec:construction}, we immediately obtain the following permutation-variant \PLeadingOnes of this problem. For all $\sigma \in S_n$, let
\begin{align*}
\PLeadingOnes(\sigma) 
&\coloneqq  \LeadingOnes(x(\sigma)) \\
 &= \max\{i \in [0..n] \mid \forall j \in [1..i] : \sigma(j)=j\}.
\end{align*}

\subsection{Runtime Analysis}

We now show that the expected runtime of the permutation-based \oea on \PLeadingOnes is $\Theta(n^3)$. As in the bit-string case, this result follows from a fitness level argument (upper bound) and the argument that a typical run will visit a linear number of fitness levels. This second argument is actually easier in the permutation setting: We can show that the probability to gain three or more levels in one iteration is so small that with constant probability this does not happen in $O(n^3)$ iterations. Hence in this time, each iteration can increase the fitness by at most two. Since any improvement takes $\Omega(n^2)$ expected time and, when assuming that no fitness gains of more than two happen, $\Omega(n)$ improvements are necessary to reach the optimum, an $\Omega(n^3)$ lower bound for the runtime follows. 

\begin{lemma}\label{LeaInc}
  In each iteration of a run of the permutation-based \oea (Algorithm~\ref{alg:oea}) on the \PLeadingOnes benchmark, the probability of a fitness improvement is at most $\frac{6}{(n-1)^2}$.
\end{lemma}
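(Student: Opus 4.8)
The plan is to argue that a fitness improvement forces two individually rare events to occur simultaneously, and then to bound their joint probability by conditioning on the number of swaps.

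First I would fix the current search point $\sigma$ and set $i := \PLeadingOnes(\sigma)$. If $i = n$ then $\sigma$ is the identity and no improvement is possible, so I may assume $i < n$; then $\sigma(j) = j$ for all $j \in [1..i]$ while $\sigma(i+1) \ne i+1$. Let $w := \sigma^{-1}(i+1)$ be the position currently holding the value $i+1$. Since the positions $1, \dots, i$ hold the values $1, \dots, i$ and position $i+1$ does not hold $i+1$, the value $i+1$ must sit at a position $w \ge i+2$; in particular $w \ne i+1$. Throughout I use that a single swap $T_j$ exchanges the contents of two positions chosen uniformly among the $\binom{n}{2}$ position-pairs, and I say that $T_j$ \emph{touches} a position $p$ if $p$ is one of these two positions.

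Next I would isolate the necessary conditions. Any improvement entails $\sigma'(i+1) = i+1$, and I claim this forces both of the following: (A)~at least one of the swaps touches position $i+1$, and (B)~at least one of the swaps touches position $w$. For~(A), the content of position $i+1$ can only be altered by a swap touching it, yet it must change from its initial value $\ne i+1$ to $i+1$. For~(B), the value $i+1$ starts at position $w$ and can be displaced only by a swap touching its current position; since it must leave $w$ (it ends at position $i+1 \ne w$), some swap must touch $w$. The main work is making these two tracking arguments precise over a whole sequence $T_1, \dots, T_k$ of swaps; I expect this to be the only genuinely delicate point, the rest being routine counting.

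Then I would bound the joint probability of~(A) and~(B) conditional on $k$. The key observation is that a single swap touches both $i+1$ and $w$ only if it equals the transposition $\{i+1, w\}$; hence on the event $(A)\cap(B)$ either some swap equals $\{i+1, w\}$, or two \emph{distinct} swaps realize the two touches separately. A swap equals a fixed pair with probability $1/\binom{n}{2}$ and touches a fixed position with probability $(n-1)/\binom{n}{2} = 2/n$, so union bounds over the (at most $k$, respectively $k(k-1)$) witnessing swaps give
\[
  \Pr[(A)\cap(B) \mid k] \;\le\; \frac{k}{\binom{n}{2}} + \frac{4\,k(k-1)}{n^2}.
\]
Finally, taking the expectation over $k \sim \Poi(1)$ and using $\mathbb{E}[k] = 1$ and $\mathbb{E}[k(k-1)] = 1$, the probability of a fitness improvement is at most
\[
  \frac{1}{\binom{n}{2}} + \frac{4}{n^2} \;=\; \frac{2}{n(n-1)} + \frac{4}{n^2} \;\le\; \frac{6}{(n-1)^2},
\]
where the last inequality bounds each of the two summands by the analogous term with denominator $(n-1)^2$. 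This matches the claimed bound.
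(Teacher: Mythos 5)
Your proof is correct and follows essentially the same route as the paper's: identify two specific targets (the paper phrases them as the values $i+1$ and $\sigma(i+1)$, you as the positions $i+1$ and $\sigma^{-1}(i+1)$, which is equivalent for uniform transpositions) that must both be touched, apply a union bound over the swaps, and average over $k\sim\Poi(1)$ using its factorial moments. The only cosmetic difference is that you split the same-swap and distinct-swap cases where the paper uses a single $\binom{2k}{2}$ count; both yield the bound $\frac{6}{(n-1)^2}$.
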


\begin{proof}
To increase the fitness via a mutation operation, it is necessary that the first element that is not in place is moved away from its position and that the correct element is moved  there. In particular, these two elements have to be among the $2k$ elements (counted with repetition) the $k$ transpositions are composed of. We recall that the probability that $k$ transpositions are applied as mutation is $\frac{1}{ek!}$. Hence the probability for this latter event is at most 
\[\sum_{k = 0}^\infty\frac{2}{e \cdot k!}\binom{2k}{2}\left(\frac{1}{n-1}\right)^2 = 6\left(\displaystyle\frac{1}{n-1}\right)^2,
\]
where we estimated, using that the second moment of a Poisson distribution is $\lambda^2 + \lambda$, that 
\begin{align*}
    \sum_{k = 0}^\infty\frac{1}{e k!}\binom{2k}{2} &=  \sum_{k = 1}^\infty\frac{k (2k-1)}{ek!}
=  2\sum_{k = 0}^\infty\frac{k^2}{ek!} - \sum_{k = 0}^\infty\frac{1}{ek!}= 3.
\qedhere
\end{align*}
\end{proof}

\begin{theorem}
  The expected runtime of the permutation-based \oea on \PLeadingOnes is $\Theta(n^3)$.
\end{theorem}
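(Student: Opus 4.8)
The plan is to prove matching upper and lower bounds of $\Theta(n^3)$, treating each direction separately.

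\textbf{Upper bound.} For the $O(n^3)$ upper bound I would use the classic fitness level method. Partition $S_n$ into the $n+1$ levels $L_i = \{\sigma \mid \PLeadingOnes(\sigma) = i\}$ for $i \in [0..n]$. Since the algorithm is elitist, it is enough to bound from below the probability of leaving each level $L_i$ (for $i < n$) to a strictly higher level. From any $\sigma$ with $\PLeadingOnes(\sigma) = i < n$, a sufficient improving move is: apply exactly one transposition, namely the one swapping the element currently sitting in position $i+1$ with the element $i+1$ (which sits in some position $p > i+1$, since positions $1,\dots,i$ already hold the correct values). This single specific transposition places $i+1$ correctly, so the new fitness is at least $i+1$. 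The probability of choosing $k=1$ is $\frac1e$, and the probability that the one random transposition is this particular one is $1/\binom{n}{2}$. Hence the improvement probability is at least $\frac{1}{e\binom{n}{2}} = \Omega(n^{-2})$, so the expected time to leave $L_i$ is $O(n^2)$. Summing over the at most $n$ levels gives an expected runtime of $O(n^3)$.

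\textbf{Lower bound.} For the matching $\Omega(n^3)$ bound I would combine two ingredients already prepared in the excerpt. First, by Lemma~\ref{LeaInc} the probability of any fitness improvement in one iteration is at most $\frac{6}{(n-1)^2}$, so each improving step takes $\Omega(n^2)$ expected iterations; thus if the run must make $\Omega(n)$ improving steps, an $\Omega(n^3)$ bound follows. The point is therefore to show that a typical run really needs $\Omega(n)$ improving steps, i.e.\ that the fitness rarely jumps by three or more at once. To this end I would bound the probability that a single iteration increases the fitness by at least three. Gaining three levels starting from fitness $i$ requires that positions $i+1, i+2, i+3$ all become correct simultaneously; this in turn forces the mutation to touch at least three prescribed elements (those that must arrive at positions $i+1,i+2,i+3$), which requires $2k \ge 3$ and hence a favourable configuration among the $\binom{2k}{3}$ triples of the $2k$ sampled endpoints, each hitting a fixed target with probability $O(n^{-1})$. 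A Poisson-moment computation analogous to the one in Lemma~\ref{LeaInc} then yields a per-iteration probability of a $\ge 3$ gain that is $O(n^{-3})$.

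With this estimate I would finish by a union bound over time. Consider the first $cn^3$ iterations for a small constant $c$. The expected number of iterations in which the fitness increases by three or more is $O(n^{-3}) \cdot cn^3 = O(1)$, and by choosing $c$ small enough this expectation is below, say, $\tfrac12$; hence with constant probability no such large jump occurs in the whole phase. On this event every improving iteration raises the fitness by at most two, so reaching the optimum from the initial fitness (which is $O(1)$ in expectation, and in any case at most $n$) requires at least $\tfrac n2 - O(1) = \Omega(n)$ improving iterations. Since each improving iteration occurs with probability $O(n^{-2})$, producing $\Omega(n)$ of them takes $\Omega(n^3)$ iterations in expectation, and a standard argument (e.g.\ via the fact that the number of improvements in $cn^3$ steps is concentrated below $\Omega(n)$ when $c$ is small) turns this into the claimed $\Omega(n^3)$ expected runtime. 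The main obstacle I anticipate is making the ``at most two levels per step'' argument fully rigorous: one must carefully combine the low-probability large-jump event with the per-step improvement probability so that the two estimates apply over the same time window, which is precisely why fixing the horizon at $\Theta(n^3)$ and using the constant-probability no-big-jump event is the cleanest route.
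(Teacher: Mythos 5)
Your upper bound is essentially identical to the paper's: one specific transposition (chosen with $k=1$) fixes position $i+1$, giving an improvement probability of $\Omega(n^{-2})$ per level and $O(n^3)$ overall by the fitness level method. No issues there.

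Your lower bound, however, has a genuine gap in the key estimate. You claim that gaining three or more fitness levels in one iteration ``forces the mutation to touch at least three prescribed elements,'' namely those destined for positions $i+1$, $i+2$, $i+3$, and conclude a per-iteration probability of $O(n^{-3})$. This is false without further conditioning: if element $i+2$ (or $i+3$) already sits in place before the mutation --- a ``free rider'' --- then a jump of three levels only requires the mutation to touch the \emph{two} elements $i+1$ and $\sigma^{-1}(i+1)$, an event of probability $\Theta(n^{-2})$, not $O(n^{-3})$. The paper closes exactly this hole: it splits on the event $B_i$ that $i+2$ or $i+3$ is already in place, bounds $\Pr[A_i \mid B_i]$ by the generic improvement probability $6/(n-1)^2$ from Lemma~\ref{LeaInc}, and then shows $\Pr[B_i] \le 2/(n-i-1)$ via a symmetry argument (the elements $i+2,\dots,n$ play interchangeable roles in all decisions the algorithm makes before reaching fitness $i$, so each is equally likely to occupy any of the positions $i+2,\dots,n$). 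The product is $O(n^{-3})$ only when $n-i = \Omega(n)$, which is why the paper tracks progress only up to fitness $\tfrac n2$ and counts $\lceil n/4\rceil$ improvements rather than arguing all the way to the optimum; your sketch should adopt the same restriction, since near the optimum the free-rider probability is no longer $O(1/n)$. The remainder of your argument (bounding the number of improvements in a window of $cn^3$ iterations via a binomial/union bound) matches the paper's final computation and is fine once the $O(n^{-3})$ estimate is repaired, but without the free-rider analysis the central claim of the lower bound does not hold.
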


\begin{proof}
If the current state $\sigma$ is such that $\textsc{PLeadingOnes}(\sigma) = i$, then the element $i+1$ is at some position $j$ with $j > i + 1$. Thus, a transposition between $i+1$ and $j$ increases the fitness by at least 1. Picking this transposition as a random transposition has probability $ \frac{2}{n(n-1)}$. Thus the probability of increasing the fitness with one local operation (which happens with probability $\frac1e$) is at least $\frac{2}{e{n(n-1)}}$. Needing at most $n$ of such steps, the expected waiting time can be bounded from above by $e\frac{n^2(n-1)}{2} = O(n^3)$; this argument is known as Wegener's fitness level method~\cite{Wegener01}.

For the lower bound, our analysis will rely on the fact that large fitness gains occur rarely. Let us consider the event that we raise the fitness by at least 3 and call it $A_i$. Let $B_i$ be the event that elements $i+2$ or $i+3$ were in place before the mutation step. Then
\begin{align*} 
\Pr[A_i] & = \Pr[A_i \mid B_i]\Pr[B_i] + \Pr[A_i \mid \olsi{B_i}]\Pr[\olsi{B_i}] \\
&\leq \Pr[A_i \mid B_i]\Pr[B_i] + \Pr[A_i \mid \olsi{B_i}].
\end{align*}

To increase the fitness by at least 3, when neither $i+2$ nor $i+3$ were in place, we need that $i+1$, $i+2$ and $i+3$ be amongst the elements touched by some transposition of the mutation step. We can hence bound $ \Pr[A_i \mid \olsi{B_i}]$ by
\begin{align*} 
\Pr[A_i \mid \olsi{B_i}] 
& \leq \sum_{k = 0}^\infty\frac{3!}{e \cdot k!}\binom{2k}{3} \left(\frac{1}{n-1}\right)^3 \\
& \leq \frac{4}{(n-1)^3} \left(2\sum_{k = 0}^\infty\frac{k^3}{ek!} - 3\sum_{k = 0}^\infty\frac{k^2}{ek!} + \sum_{k = 0}^\infty\frac{k}{ek!}\right) \\
& \leq {20}\left(\displaystyle\frac{1}{n-1}\right)^3,
\end{align*}
where we used that the second and third moment of a Poisson distribution with parameter $\lambda$ are $\lambda^2 + \lambda$  and $\lambda^3 + 3\lambda^2 +\lambda$.

Similarly, to increase the fitness in general, we need that $i+1$ and $\sigma(i+1)$ be amongst the elements touched by a transposition. Hence, by Lemma \ref{LeaInc},
\begin{align*}  
\Pr[A_i \mid B_i]  \le & \sum_{k = 0}^\infty\frac{2}{e \cdot k!}\binom{2k}{2}\left(\frac{1}{n-1}\right)^2 = 6\left(\displaystyle\frac{1}{n-1}\right)^2.
\end{align*} 

Finally, to estimate $\Pr[B_i]$, we note that, for a permutation $\sigma$ and until reaching \textsc{PLeadingOnes}($\sigma$) = $i$, the elements $i+2, i+3, \dots, n$ play symmetric roles for the decisions taken by the algorithm. Hence $i+2$ and $i+3$ are equally likely to be at any position $i+2$ through $n$, and thus $\Pr[B_i] \leq \Pr[\sigma(i+2)=i+2] + \Pr[\sigma(i+3)=i+3] \leq 2\frac{1}{n-i-1}$.
Putting these estimates together, we obtain $\Pr[A_i] \leq \frac{44}{(n-1)^3}$ for all $i \le \frac n2- \frac 12$.

Since we aim at an asymptotic result, let us assume that $n$ is at least~$4$. Let $E$ be the event of reaching fitness greater than $\frac n2 - \frac 12$, that is, at least $\frac n2$ in at most $t = \lfloor \frac{(n-1)^3}{m} \rfloor$ steps starting from a fitness of $0$, where $m$ is a constant we will explicit later. Let $F$ be the event of having at least one fitness increase of at least $3$ during this time span. If $F$ does not occur, we  need at least $\Delta = \lceil \frac n4 \rceil$ fitness improvements, giving the following bound for $m$ sufficiently large. 
\begin{align*}  \Pr[E] & \leq \Pr[F] + \Pr\left[E|\olsi{F}\right]&&\\
& \leq  t \frac{44}{(n-1)^3} + \binom{t}{\Delta} \left(\frac{6}{(n-1)^2}\right)^\Delta\\
& \leq \frac{44}{m} + \left(\frac{\frac{(n-1)^3}{m}e}{\Delta}\right)^\Delta \left(\frac{6}{(n-1)^2}\right)^\Delta\\
& \leq \frac{44}{m} + \left(\frac{24e}{m}\right)^\frac{n}{4} \le \frac 12.
\end{align*}

Since $n \ge 4$, the initial random permutation has fitness 0 with probability at least $\frac 34$. Hence the expected time to reach a fitness of at least $\frac n2$ from a random initial permutation is at least $\frac 34 \Pr[\overline E] (t+1) = \Omega\left(n^3\right)$. Thus, also the unconditional expected runtime is $\Omega\left(n^3\right).$
\end{proof}

\section{Runtime Analysis for the Permutation-Jump Benchmark}\label{sec:jump}

We proceed with a runtime analysis of the permutation variant of the \jump benchmark. In contrast to our analysis for \leadingones, where mild adaptations of the proofs for the bit-string case were sufficient, we now observe substantially new phenomena, which require substantially more work in the analysis. In particular, different from the bit-string case, where all local optima of the jump function were equivalent, now the cycle structure of the local optimum is important. Consequently, the probability of jumping from a local optimum to the global one in one iteration can range from $\Theta(n^{-2(m-1)})$ to $\Theta(n^{-2\lceil m/2 \rceil})$, where $m$ is the (constant) jump parameter. By analyzing the random walk which the \oea performs on the set of local optima while searching for the global optimum, we shall nevertheless prove a runtime of order $\Theta(n^{2\lceil m/2 \rceil})$ only.

\subsection{Definition of the Problem}

The \jump benchmark as pseudo-Boolean optimization problem was proposed in~\cite{DrosteJW02}. It is the by far most studied multimodal benchmark in the theory of evolutionary algorithm and has led to a broad set of interesting insights, mostly on crossover and on how evolutionary algorithms cope with local optima~\cite{DrosteJW02,JansenW02,DoerrDK15ecj,DoerrLMN17,HasenohrlS18,DangFKKLOSS16,DangFKKLOSS18,WhitleyVHM18,RoweA19,Doerr20gecco,AntipovDK20,RajabiW20,RajabiW21gecco,Doerr21cgajump,BenbakiBD21}. 

We now define its permutation version, following our general construction from Section~\ref{sec:construction}. To ease the notation, let $g$ denote the function that counts the number of fixed points of a permutation, that is, the number $i \in [1..n]$ of elements that are ``in place'', that is, that satisfy $\sigma(i) = i$. By our general construction principle, this is nothing else than the permutation-variant of the \onemax benchmark. The permutation-based version of the \jump benchmark, again following our general construction, is now defined as follows.

For all $n,m\in\mathbb N$, such that $m\leq n$, let \PJump$_{n,m}$ be the map from $S_n$ to $\mathbb N$ defined by
\[\PJump_{n,m}(\sigma):=\left\{\begin{array}{l}m+g(\sigma)\;\;\;\; \text{ if }g(\sigma)\leq n-m\text{ or }g(\sigma)=n,\\
                                n-g(\sigma)\;\;\;\;\text{ otherwise.}
\end{array}\right.
\]
Since a permutation cannot have exactly $n-1$ fixed points, we see that $\PJump_{n,2}$ is equal to $g+2$, hence essentially a \onemax function. For that reason, we shall always assume $m \ge 3$.

For complexity analysis, we define the sets
$$
\begin{matrix*}[l]
    A_1 = \{\sigma\in S_n \mid g(\sigma)>n-m \text{ and } g(\sigma)\neq n\},\\
    A_2 = \{\sigma\in S_n \mid g(\sigma)\leq n-m\}, \\
    A_2^+ = \{\sigma\in S_n \mid g(\sigma)=n-m\}, \\
    A_3 = \{\Id_{[1..n]}\}.
\end{matrix*}
$$
By definition, for all $ \left(\sigma_1,\sigma_2,\sigma_2^+,\sigma_3\right)\in A_1\times A_2\times A_2^+\times A_3$, we have
$$
\PJump(\sigma_1)<\PJump(\sigma_2) \leq \PJump(\sigma_2^+)<\PJump(\sigma_3).
$$

\subsection{Runtime Analysis, Upper Bound}

To prove an upper bound on the runtime of the permutation-based \oea on jump functions, we first show the following upper bound on the expected time spent on $A_2^+$, which will be the bottleneck for the runtime of the algorithm. 

\begin{theorem}\label{localglobalup}\label{thm:jumpUP}
Let $m\geq 3$ be a constant. The permutation-based \oea started in a local optimum finds the global optimum of $\PJump_{n,m}$ in an expected number of $O(n^{2\lceil\frac m2\rceil})$ iterations.
\end{theorem}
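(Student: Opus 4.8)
The plan is to exploit the fact that, started in a local optimum $\sigma \in A_2^+$, the algorithm can only improve by jumping directly to $\Id_{[1..n]}$: any move into $A_1$ or $A_2$ strictly decreases the fitness and is rejected, whereas moves keeping $g(\sigma) = n-m$ are accepted at equal fitness. Hence the run is a random walk on the plateau $A_2^+$ that ends when it hits the identity, and I would first describe this walk through the cycle structure of $\sigma$. The $m$ displaced elements form a fixed-point-free permutation whose cycle type is a partition of $m$ into parts of size at least $2$, and the number of transpositions needed to sort it (equivalently, to reach $\Id_{[1..n]}$) is $d(\sigma) = m - c(\sigma)$, where $c(\sigma)$ is the number of displaced cycles. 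A single accepted transposition on the plateau either merges two displaced cycles ($c \to c-1$) or splits one displaced cycle of length $\ge 4$ into two cycles of length $\ge 2$ ($c \to c+1$); every other outcome of a one-transposition mutation leaves the plateau and is rejected. By symmetry over which elements are displaced, projecting the walk onto cycle types yields a Markov chain on the constantly many (as $m = O(1)$) partitions of $m$ into parts $\ge 2$, with merge and split transitions.

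Next I would identify the \emph{good} states as those with the maximal number of cycles, i.e. $c = \lfloor m/2\rfloor$ and hence $d = \lceil m/2\rceil$: all $2$-cycles when $m$ is even, and one $3$-cycle together with $2$-cycles when $m$ is odd. From a good state the identity is reached in one iteration with probability $\Omega(n^{-2\lceil m/2\rceil})$: choose $k = \lceil m/2\rceil$ in the $\Poi(1)$ step (probability $\Theta(1)$) and let the $k$ transpositions form a minimal sorting sequence of the displaced part, each specific transposition among the displaced elements being sampled with probability $\Theta(n^{-2})$. This matches the target exponent, so good states are exactly those from which the jump is most likely.

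The heart of the proof is a two-timescale argument. A step of the cycle-type walk requires $k=1$ and then two displaced elements, which happens with probability $\Theta(n^{-2})$; all other type-changing events have probability $O(n^{-3})$. I would verify that the merge/split graph on partitions of $m$ into parts $\ge 2$ is connected (from any type one can merge down to the single $m$-cycle and then split off parts to reach any prescribed type), so the finite chain is irreducible, mixes in $O(1)$ of its own steps, and has stationary mass $\Omega(1)$ on good states. Translating back, in a block of $C n^2$ iterations the walk performs $\Theta(C)$ steps and hence, with probability $\Omega(1)$, reaches a good state and then dwells there for $\Omega(n^2)$ iterations (the next type-changing step is geometric with rate $\Theta(n^{-2})$, while a jump has the much smaller rate $\Theta(n^{-2\lceil m/2\rceil})$ since $\lceil m/2\rceil \ge 2$). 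During such a dwell the identity is hit with probability $\Omega(n^2 \cdot n^{-2\lceil m/2\rceil}) = \Omega(n^{-2(\lceil m/2\rceil-1)})$. Thus each block of length $\Theta(n^2)$ succeeds with probability $\Omega(n^{-2(\lceil m/2\rceil-1)})$, and a restart (geometric) argument gives expected runtime $O(n^2)\cdot O(n^{2(\lceil m/2\rceil-1)}) = O(n^{2\lceil m/2\rceil})$.

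The main obstacle is making this two-timescale heuristic rigorous: one must show that the walk genuinely spends a constant fraction of its time at good states even when started from the worst (single-$m$-cycle) plateau state, which requires controlling the mixing of the cycle-type chain relative to the far slower jump events and ruling out that the chain lingers in poor types long enough to inflate the bound by an extra factor of $n^2$. The bookkeeping is eased by $m$ being constant (finitely many types, all transition and jump probabilities are fixed constants times powers of $n^{-2}$), but the clean separation of the $\Theta(n^2)$ walk timescale from the $\Theta(n^{2\lceil m/2\rceil})$ jump timescale, together with the connectivity and stationary-mass facts for the merge--split chain and the constant-factor lower bound on the jump probability from good states, is where the real work lies.
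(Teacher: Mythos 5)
Your proposal follows the same skeleton as the paper's proof of Theorem~\ref{thm:jumpUP}: single out the \emph{good} local optima (maximal number of displaced cycles, hence sortable by $\lceil m/2\rceil$ transpositions), show the plateau walk reaches one within $O(n^2)$ iterations with probability $\Omega(1)$, lower-bound the per-iteration jump probability from a good state by $\Omega(n^{-2\lceil m/2\rceil})$, and conclude with a restart argument over blocks of length $\Theta(n^2)$; your ``dwell'' computation is essentially the paper's Step~2 (the $t'$ disjoint events of jumping in one iteration and standing still in all others). The genuine difference is how the first phase is justified, and this is also where your argument is incomplete. You project the walk onto cycle types and appeal to irreducibility, mixing, and stationary mass of the merge--split chain; besides being the part you explicitly defer, it is more delicate than stated, since every type-changing step alters the number of displaced cycles by exactly $\pm 1$, so the projected chain is $2$-periodic and ``mixing to stationarity'' is not the right tool --- what you actually need is a uniform $\Omega(1)$ bound on hitting a good type within $O(1)$ type-changing steps from the worst type. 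The paper closes exactly this gap with a purely combinatorial statement (Lemma~\ref{lem:good}): from \emph{any} $\sigma\in A_2^+$ there is an explicit sequence of $\ell\le m/2$ transpositions producing a good permutation; it then lower-bounds the probability that precisely this sequence is executed, one transposition per iteration, within $(n-1)^2/m$ iterations while every other iteration leaves the cycle structure intact (Lemma~\ref{Dec} shows one iteration disturbs it with probability $O((m/(n-1))^2)$ only). That route yields the constant $B_m$ with no Markov-chain machinery; your route is workable and arguably describes the walk's typical behaviour more faithfully, but to complete it you would still have to prove the connectivity/hitting-time facts (e.g., merge down to the single $m$-cycle and split back up), which is morally the same combinatorial content as Lemma~\ref{lem:good} expressed less directly.
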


The key to prove this result is the following observation. Since we use sequences of swap operations as mutation operation, the probability that we mutate a local optimum into the global optimum heavily depends on the smallest number $\ell$ such that the local optimum can be written as product of $\ell$ transpositions. This number can range from $\lceil \frac m2 \rceil$ to $m-1$. Hence to prove a good upper bound on the time to go from a local to the global optimum, we argue that the algorithm regularly visits local optima with this shortest possible product length and then from there has a decent chance to generate the global optimum. 

For this, we shall need the following estimate for the probability of modifying the cycle structure of a given local optimum. The proof of this result is omitted for reasons of space. As all other omitted proofs, it can be found in the appendix.

\begin{lemma}\label{Dec}
If the current permutation is a local optimum, then  the probability that one iteration of the \oea changes the number of its cycles in the cycle decomposition is at most $3(\frac{m}{n-1})^2$.
\end{lemma}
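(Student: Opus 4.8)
The plan is to reduce the event ``the number of cycles changes'' to the purely combinatorial event ``at least two of the $m$ moved elements are touched by the mutation'', and then to bound the latter by a union bound over pairs, reusing the Poisson estimate already established in Lemma~\ref{LeaInc}. Throughout, write $\sigma$ for the current permutation, assumed to be a local optimum, so that $g(\sigma)=n-m$ and the set $M$ of non-fixed elements has size exactly $m$; note that $M$ is a union of cycles of $\sigma$, hence $\sigma(M)=M$. One iteration replaces $\sigma$ by $\sigma'=\rho\sigma$, where $\rho=T_k\circ\cdots\circ T_1$ is the product of the $k\sim\Poi(1)$ random transpositions, and the step is accepted only if $g(\sigma')\in\{n-m,n\}$, since every other outcome lies strictly below $\PJump(\sigma)$. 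In particular, the number of cycles of the current permutation can change only in an accepted step.

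The key structural claim, which I expect to be the main obstacle, is: if an accepted step actually modifies $\sigma$ (and hence in particular if it changes the number of cycles), then at least two elements of $M$ occur among the $2k$ transposition endpoints. I would prove this by contradiction, according to how many elements of $M$ are touched. If no element of $M$ is touched, then $\rho$ is supported on the complement of $M$; since $\sigma(M)=M$, every element of $M$ is still moved exactly as before, so acceptance (which forces all touched fixed points to stay fixed) is possible only when $\rho$ is the identity, i.e. $\sigma'=\sigma$. If exactly one element $a\in M$ is touched, let $F$ denote the set of touched fixed points, so $\rho$ permutes $\{a\}\cup F$, a set whose intersection with $M$ is just $\{a\}$. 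A direct computation then shows that \emph{every} element of $M$ stays non-fixed in $\sigma'$: for $i\in M$ with $\sigma(i)\neq a$ one has $\sigma(i)\in M\setminus\{a\}$, which is untouched, so $\sigma'(i)=\sigma(i)\neq i$; and for the unique predecessor $\sigma^{-1}(a)\in M\setminus\{a\}$ one gets $\sigma'(\sigma^{-1}(a))=\rho(a)$, which lies in $\{a\}\cup F$ and therefore cannot equal $\sigma^{-1}(a)$. Thus $\sigma'$ already has $m$ non-fixed points coming from $M$, so acceptance ($g(\sigma')\geq n-m$) forces every element of $F$ to stay fixed, which forces $\rho=\Id$ and again $\sigma'=\sigma$. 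Either way no genuine accepted change occurs, which proves the claim.

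Granting the claim, the probability that an iteration changes the number of cycles is at most the probability that two elements of $M$ are touched, so a union bound over the $\binom{m}{2}$ pairs gives
\[
\Pr[\text{the number of cycles changes}]\le\sum_{\{u,v\}\subseteq M}\Pr[u\text{ and }v\text{ are both touched}].
\]
Each summand is exactly the ``two fixed elements among the $2k$ endpoints'' probability estimated inside the proof of Lemma~\ref{LeaInc}, namely at most $\sum_{k=0}^{\infty}\frac{2}{e\,k!}\binom{2k}{2}\left(\frac{1}{n-1}\right)^2=6\left(\frac{1}{n-1}\right)^2$. Summing over the $\binom{m}{2}=\frac{m(m-1)}{2}$ pairs yields the bound $\frac{m(m-1)}{2}\cdot 6\left(\frac{1}{n-1}\right)^2=3m(m-1)\left(\frac{1}{n-1}\right)^2\le 3\left(\frac{m}{n-1}\right)^2$, as desired. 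As a final sanity check I would verify that the alternative accepted outcome, reaching the global optimum $\Id$, is automatically subsumed by the claim: it fixes all $m\ge 3$ elements of $M$ and hence certainly touches at least two of them, so no separate argument for that case is needed.
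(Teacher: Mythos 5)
Your proof is correct and follows essentially the same route as the paper's: reduce the event to ``at least two of the $m$ deranged elements appear among the $2k$ transposition endpoints'' and then apply the Poisson moment estimate from Lemma~\ref{LeaInc}. The only differences are cosmetic --- you organize the union bound over the $\binom{m}{2}$ element pairs rather than over the $\binom{2k}{2}$ endpoint slots (both yield $3(\frac{m}{n-1})^2$), and you spell out the case analysis behind the structural claim that the paper disposes of in a single sentence.
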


We call a permutation $\sigma \in A_2^+$ \emph{good} if it consists of as many disjoint cycles as possible. This means that, apart from the $n-m$ cycles of length one, which are not that interesting, the remaining $m$ elements are permuted via (i)~a product of $m/2$ disjoint transpositions if $m$ is even, or (ii)~a product of $(m-3)/2$ disjoint transpositions and a $3$-cycle, also disjoint from these, if $m$ is odd. We first show that any $\sigma \in A_2^+$ can be transformed into a good permutation in $A_2^+$ by applying at most $m/2$ transpositions.

\begin{lemma}\label{lem:good}
  Let $\sigma \in A_2^+$. Then there is an $\ell \le \frac m2$ and a sequence of transpositions $\tau_1, \dots, \tau_\ell$ such that $\tau_\ell \circ    \dots \circ \tau_1 \circ \sigma$ is a good permutation in $A_2^+$.
\end{lemma}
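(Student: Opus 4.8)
The plan is to exploit the standard fact that left-multiplying a permutation by a transposition changes its number of cycles by exactly one: if the two swapped elements lie in the \emph{same} cycle of $\sigma$, that cycle splits into two, and if they lie in \emph{different} cycles, the two cycles merge into one. Moreover, a split is controllable, in that a single transposition turns a $k$-cycle into an $s$-cycle and a $(k-s)$-cycle for any chosen $1 \le s \le k-1$. Since $\sigma \in A_2^+$ has exactly $m$ non-fixed points, partitioned into disjoint cycles of lengths $k_1, \dots, k_r \ge 2$ with $\sum_i k_i = m$, the entire argument takes place on these $m$ moved elements; the $n-m$ fixed points are never touched. It therefore suffices to restructure the nontrivial cycles into the good form while never producing a cycle of length one (which would increase $g$ and leave $A_2^+$), using at most $\tfrac m2$ transpositions.

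The first step I would record is the parity observation that drives everything: the number $o$ of odd-length cycles of $\sigma$ satisfies $o \equiv m \pmod 2$, since the even cycles contribute an even amount to $\sum_i k_i = m$ while each odd cycle contributes an odd amount. In a first phase I then merge odd cycles in pairs. Each merge costs one transposition, and because two odd cycles have combined length at least $6$, the merged cycle has even length at least $4$, so no fixed point is created. When $m$ is even, $o$ is even and all odd cycles pair off, leaving only even cycles; when $m$ is odd, $o$ is odd, so I pair off all but one odd cycle and keep that last one aside.

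In a second phase I split. Each even cycle of length $k$ is split into $k/2$ disjoint transpositions by $k/2-1$ successive splits, always splitting off a $2$-cycle so that every intermediate piece has length at least $2$; in the odd case, the single remaining odd cycle of length $k$ is split into one $3$-cycle and $(k-3)/2$ transpositions. The resulting permutation consists either of $m/2$ disjoint transpositions (for $m$ even) or of $(m-3)/2$ disjoint transpositions and one $3$-cycle (for $m$ odd). In both cases this is precisely a good permutation, and since no fixed point was ever created it still has exactly $n-m$ fixed points, hence lies in $A_2^+$, as required.

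What remains, and what I expect to be the only delicate point, is the count. Writing $e$ for the number of even cycles of $\sigma$, the merges of Phase~1 and the splits of Phase~2 telescope: for $m$ even the total is $\tfrac o2 + \bigl(\tfrac m2 - e - \tfrac o2\bigr) = \tfrac m2 - e$ (using that after merging there are $e+\tfrac o2$ even cycles of total length $m$), and for $m$ odd the total is $\tfrac{o-1}2 + \bigl(\tfrac m2 - e - \tfrac{o-1}2 - \tfrac32\bigr) = \tfrac{m-3}2 - e$, both of which are integers and both at most $\tfrac m2$. The main obstacle is thus bookkeeping rather than conceptual: one must verify that no merge or split ever creates a length-one cycle (so that the whole sequence, and in particular its endpoint, stays in $A_2^+$) and that the merge and split counts balance out, and it is exactly the parity identity $o \equiv m \pmod 2$ that makes the odd cycles pair up cleanly so that the target good structure is reachable at all.
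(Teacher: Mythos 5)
Your proposal is correct and follows essentially the same route as the paper's proof: first merge the odd-length cycles in pairs, then split the resulting cycles into disjoint $2$-cycles (plus one $3$-cycle when $m$ is odd), with the telescoping count giving at most $\tfrac m2$ transpositions. Your additional bookkeeping that no intermediate step creates a fixed point is a welcome extra detail (it is what makes the lemma usable in the random-walk argument), but the underlying argument is the same.
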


\begin{proof}
  Let $c$ denote the number of cycles of odd length larger than one in the cycle decomposition of $\sigma$. Note that two such odd-length cycles can be merged by applying a transposition that contains one element from each cycle. Hence there are $c' = \lfloor \frac c2 \rfloor$ transpositions $\tau_1, \dots, \tau_{c'}$ such that $\sigma' := \tau_{c'} \circ \dots \circ \tau_1 \circ \sigma$ contains exactly $c - 2c'$ cycles of odd length larger than one (which is one such cycle if $c$ is odd and no such cycle if $c$ is even).
  
  We note that a cycle of some length $k$ can be split into a $2$-cycle and a $(k-2)$-cycle by applying one transposition. Since $\sigma'$ is the product of at least $c'$ disjoint cycles (of length larger than one) whose lengths add up to at most $m$, we see that there are $\ell' \le \frac m2 - c'$ and transpositions $\tau'_1, \dots, \tau'_{\ell'}$ such that $\tau'_{\ell'} \circ \dots \circ \tau'_1 \sigma'$ is the product of disjoint $2$-cycles and possibly one $3$-cycle (namely when $m$ is odd). This is the good permutation proving this lemma. 
\end{proof}

We are now ready to prove Theorem~\ref{thm:jumpUP}. This proof will be divided into two steps:
 \begin{enumerate}
     \item  We show that from the current local optimum, a good permutation can be reached within the next $\frac{(n-1)^2}{m}$ iterations with at least a constant probability.

    \item We give a lower bound on the probability of reaching the global optimum from a good local optimum within again $\frac {(n-1)^2}{m^2}$ iterations.
 \end{enumerate}

\begin{proof}[Proof of Theorem \ref{localglobalup}]

 {\bf Step 1:} Since we aim at an asymptotic statement, we can always assume that $n$ is sufficiently large. Let $\sigma \in A_2^+$ be the current permutation. By Lemma~\ref{lem:good}, there are $\ell \le \frac m2$ and transpositions $\tau_1,\dots, \tau_\ell$ such that $\tau_\ell \circ \dots \circ \tau_1 \circ \sigma$ is a good permutation in $A^+_2$. 
 
 Let $E$ be the event of applying this sequence of transpositions during a timespan of $t = \frac{(n-1)^2}{m}$ iterations, using mutations which keep the intermediate states unmodified in the remaining $t-\ell$ iterations. Each of these latter mutations occurs with probability at least $p_u = 1-3\left(\frac{m}{n-1}\right)^2$ by Lemma~\ref{Dec}.

 We therefore have (using that $n \geq 2$)
 \begin{align*}
 \Pr[E] 
 &\ge \binom{t}{\ell} \left(\frac{2}{en(n-1)}\right)^{\ell} p_u^{t-\ell} \\
&\ge \left(\frac 2e\right)^{\ell} 
\left(\frac{1}{\ell m}\right)^\ell 
\left(\frac{n-1}{n} \right)^{\ell} p_u^{t} \\
&\ge \left(\frac{2}{em^2}\right)^{\frac{m}{2}} p_u^{t} \\
&\ge \left(\frac{2}{em^2}\right)^{\frac{m}{2}} \left(1-3\left(\frac{m}{n-1}\right)^2\right)^{t}.
\end{align*} 
Since $\left(1-3\left(\frac{m}{n-1}\right)^2\right)^{\frac{(n-1)^2}{m}} \rightarrow e^{-3m}$ for $n$ sufficiently large, we have
\[
\Pr[E] \ge \displaystyle \frac{1}{2}\cdot\left(\frac{2}{m^2}\right)^{\frac{m}{2}}\exp{\left(-\frac{7}{2} m\right)}:=B_m.
\]

\noindent{\bf Step 2:} The second argument is a lower bound on the probability of going from a good local optimum to the global optimum in $t' = \frac{(n-1)^2}{m^2}$ steps. For this, we first observe that a good local optimum can be written as the product of $\lceil \frac m2 \rceil$ transposition (namely the disjoint transpositions the good local optimum consists of plus possibly two more for the $3$-cycle in the case that $m$ is odd). Hence the good local optimum can be mutated into the global optimum by applying $k = \lceil \frac m2 \rceil$ suitable transpositions. The probability for this is at least
\[
\frac{1}{e\lceil\frac m 2\rceil !}\frac{1}{\left(\frac{n(n-1)}{2}\right)^{\lceil\frac m2\rceil}}.
\]
To estimate the probability that this happens within $t'$ steps, we regard the $t'$ disjoint events that this happens in one iteration and that the state is not changed in the remaining $t'-1$ iterations (it is necessary that we are in a good local optimum in the iteration which shall bring us to the global optimum).

The probability of this event (assuming $n$ sufficiently large), is
\begin{align*} 
t &\frac{1}{e \lceil\frac m 2\rceil ! \left(\frac{n(n-1)}{2}\right)^{\lceil\frac m2\rceil}} \left(1-3 \left(\frac{m}{n-1}\right)^2\right)^{\frac {(n-1)^2}{m^2}-1} \\
&\ge \frac {(n-1)^2}{m^2} \frac{1}{e^4 \lceil\frac m 2\rceil !}  \left(\frac{n(n-1)}{2}\right)^{-\lceil\frac m2\rceil}:=D_{n,m}.
\end{align*}
Combining Steps~1 and~2, we see that in each interval of $C_m (n-1)^2$ iterations ($C_m:= \frac{1}{m^2} + \frac{1}{m})$, independently of what happened before, we find the optimum with probability at least $B_m D_{n,m}$.

For each positive integer $t$, let $A_t$ be the event of not reaching the global optimum in $t$ iterations. We therefore have
\begin{align*}
    \Pr[A_t] & \le \left(1-B_m D_{n,m}\right)^{\left \lfloor \frac{t}{C_m (n-1)^2} \right \rfloor}&&\\
        &\leq \exp\left(-B_m D_{n,m}\left \lfloor \frac t {C_m (n-1)^2} \right \rfloor \right).&&
\end{align*}

Thus, for $t > \lambda\frac{C_m (n-1)^2}{B_m D_{n,m}}$ for some positive real $\lambda$, we have $\Pr[A_t] \le \exp(-\lambda)$.

We conclude that the expected time for reaching the global optimum is $ O(n^{2\lceil\frac m2\rceil})$, where we recall that we treat $m$ as a constant.
\end{proof}

The result above together with elementary fitness-level arguments showing that $O(n^2 \log n)$ time suffices to reach a local optimum (omitted for reasons of space) proves our upper bound on the expected runtime.

\begin{theorem}
  Let $m \ge 3$ be a constant. The expected runtime of the permutation-based \oea on $\Pjump_{n,m}$ is $O(n^{2\lceil\frac m2\rceil})$.
\end{theorem}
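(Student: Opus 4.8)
The plan is to combine the two facts already at our disposal through a standard two-phase, elitism-based decomposition of the run. Write $T$ for the runtime from a uniformly random start. Because the \oea accepts only non-worsening offspring and, by the fitness ordering $\PJump(\sigma_1) < \PJump(\sigma_2) \le \PJump(\sigma_2^+) < \PJump(\sigma_3)$, every permutation outside $A_2^+ \cup A_3$ has fitness strictly below the common value $n$ attained on the plateau of local optima, the process passes monotonically through three regimes: it first leaves the ``valley'' $A_1$, then climbs inside $A_2$ to the plateau $A_2^+$, and finally jumps from the plateau to the global optimum $A_3 = \{\Id_{[1..n]}\}$. Crucially, once a state of fitness $n$ is reached the algorithm can never return to $A_1 \cup (A_2 \setminus A_2^+)$, so the two phases do not interfere and their expected costs add up by the Markov property.

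For the first phase I would bound the expected time $T_1$ to first reach $A_2^+ \cup A_3$ by an elementary fitness-level argument. Leaving $A_1$ is cheap: for $\sigma \in A_1$ we have $g(\sigma) > n-m$, so at least $\binom{n-m+1}{2} = \Theta(n^2)$ transpositions swap two fixed points; applying a single such transposition (probability $\Theta(1)$ per iteration) strictly decreases $g$, hence strictly increases the fitness $n-g$, and since $A_1$ contains only $O(m)=O(1)$ fitness levels the process leaves $A_1$ for $A_2$ after $O(1)$ expected iterations. Inside $A_2$, using $g$ itself as the potential, a state with $g=j<n-m$ admits $\Omega(n-j)$ transpositions that create an additional fixed point, each selected (together with $k=1$) with probability $\Theta(n^{-2})$; summing the resulting waiting times $O(n^2/(n-j))$ over $j$ gives the coupon-collector estimate $E[T_1] = O(n^2 \log n)$, which is exactly the sorting time for the $\HAM$/permutation-\onemax problem established in~\cite{ScharnowTW04}.

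For the second phase, Theorem~\ref{thm:jumpUP} supplies that from \emph{any} local optimum the expected time to reach $\Id_{[1..n]}$ is $O(n^{2\lceil m/2\rceil})$. Conditioning on the state in which phase one terminates, it is either already $A_3$ (and we are done) or some $\sigma \in A_2^+$, from which the conditional expected remaining time is $O(n^{2\lceil m/2\rceil})$ uniformly in $\sigma$. Hence $E[T] \le E[T_1] + O(n^{2\lceil m/2\rceil})$. Finally, since $m \ge 3$ forces $2\lceil m/2\rceil \ge 4$, we have $n^2 \log n = o(n^{2\lceil m/2\rceil})$, the first-phase cost is absorbed, and $E[T] = O(n^{2\lceil m/2\rceil})$ follows.

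The only genuinely new bookkeeping here is the fitness-level accounting of phase one, and even that is standard. The substantive difficulty---analyzing the plateau random walk so as to regularly visit a \emph{good} local optimum and jump from it---has already been resolved in Theorem~\ref{thm:jumpUP}. Thus the main (and comparatively minor) obstacle for this final statement is simply to verify carefully that elitism forbids the phases from interacting, so that the two expectations may be added, and that the $O(n^2\log n)$ term is indeed dominated once $m \ge 3$.
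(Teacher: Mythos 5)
Your proposal is correct and follows exactly the paper's (largely implicit) argument: combine Theorem~\ref{thm:jumpUP} for the plateau-to-optimum phase with an elementary $O(n^2\log n)$ fitness-level bound for reaching $A_2^+\cup A_3$, the latter being absorbed since $2\lceil m/2\rceil\ge 4$. The only difference is that you spell out the fitness-level bookkeeping for $A_1$ and $A_2\setminus A_2^+$, which the paper omits for reasons of space, and your accounting there is sound.
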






\subsection{Runtime Analysis, Lower Bound}

We now prove that our upper bound from Section 6.2 is asymptotically tight. The main argument in this lower bound proof is that applying a single transposition on a permutation $\sigma$ increases the number of cycles by at most $1$, and this only if the transposition operates on elements belonging to a common cycle of $\sigma$. We first give an upper bound on the probability that a random transposition increases the number of cycles (proof omitted for reasons of space).

\begin{lemma}\label{spl}
Given a permutation $\sigma \in S_n$ with $r > 0$ distinct cycles (possibly of length one), the probability that a random transposition consists of two elements from the same cycle is at most $\frac{(n-r)(n-r+1)}{n(n-1)}$.
\end{lemma}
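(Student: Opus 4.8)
The plan is to compute the probability exactly as a ratio of favorable pairs to total pairs, rewrite the favorable count in terms of the cycle lengths, and then bound the resulting sum of squares by identifying the worst-case cycle structure via an elementary exchange argument.

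Denote by $k_1, \dots, k_r$ the lengths of the $r$ cycles of $\sigma$ (counting fixed points as cycles of length one), so that $\sum_{i=1}^r k_i = n$ with each $k_i \ge 1$. A random transposition selects an unordered pair of distinct elements uniformly among the $\binom n2$ such pairs, and such a pair lies inside a single cycle precisely when both its elements belong to some common cycle $i$. Since the number of pairs contained in cycle $i$ is $\binom{k_i}{2}$, the probability in question equals
\[
\frac{\sum_{i=1}^r \binom{k_i}{2}}{\binom n2} = \frac{\sum_{i=1}^r k_i(k_i-1)}{n(n-1)}.
\]
Because the denominator already matches the target, it remains to prove the numerator bound $\sum_{i=1}^r k_i(k_i-1) \le (n-r)(n-r+1)$, that is, to maximize $\sum_{i=1}^r k_i(k_i-1)$ over all integer configurations with $k_i \ge 1$ and $\sum_i k_i = n$.

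The main step is to show this maximum is attained at the extremal profile consisting of a single cycle of length $n-r+1$ together with $r-1$ fixed points. I would argue this by a merging inequality: replacing two cycle lengths $a,b \ge 1$ by $a+b-1$ and $1$ changes $\sum_i k_i(k_i-1)$ by $2(a-1)(b-1) \ge 0$, hence never decreases it. Iterating this merge collapses any configuration into the extremal one. (Alternatively, one may invoke convexity: $k \mapsto k(k-1)$ is convex, so the sum is maximized at a vertex of the polytope $\{k_i \ge 1, \sum_i k_i = n\}$, and those vertices are exactly the profiles with all but one coordinate equal to $1$.) Evaluating the quantity at the extremal profile gives $(n-r+1)\bigl((n-r+1)-1\bigr) = (n-r+1)(n-r)$, which is precisely the desired numerator bound; dividing by $n(n-1)$ completes the proof.

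The only delicate point is the justification that the maximum over feasible cycle-length profiles is reached at the ``one big cycle'' configuration. Both routes above dispose of this cleanly, but I would favor the explicit exchange inequality $2(a-1)(b-1)\ge 0$: it is entirely elementary, requires no appeal to extreme-point theory over the integer simplex, and makes transparent why concentrating all non-trivial mass into a single large cycle is the worst case for the probability of selecting an intra-cycle transposition.
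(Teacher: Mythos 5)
Your proof is correct and follows the same overall skeleton as the paper's: compute the probability exactly as $\sum_i k_i(k_i-1)/(n(n-1))$ and then establish the numerator bound $\sum_i k_i(k_i-1)\le (n-r)(n-r+1)$. Where you differ is in how that key inequality is proved. The paper writes the length vector $(k_1,\dots,k_r)$ as the convex combination $\sum_{i=1}^r \frac{k_i-1}{n-r}(\bold{1}+(n-r)e_i)$ and applies Jensen's inequality to the convex function $f(k_1,\dots,k_r)=\sum_i k_i(k_i-1)$, evaluating $f$ at the extremal points $(1,\dots,1,n-r+1,1,\dots,1)$ -- essentially the ``extreme point'' route you mention as your alternative, but made explicit so that no appeal to vertex structure of the integer simplex is needed. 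Your primary route, the merging step $(a,b)\mapsto(a+b-1,1)$ with increment $2(a-1)(b-1)\ge 0$, is a fully elementary discrete smoothing argument that reaches the same extremal profile; it avoids convexity altogether and arguably makes the worst case (one long cycle plus fixed points) more transparent, at the modest cost of a short termination argument for the iteration. Both are complete and correct; the paper's version is a one-line computation once the convex decomposition is written down, while yours is more self-contained.
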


We are now ready to prove the main result of this section.
\begin{theorem}\label{low}
 Let $m \ge 3$ be a constant. The expected runtime of the permutation-based \oea on $\Pjump_{n,m}$ is $\Omega(n^{2\lceil\frac m2\rceil})$.
\end{theorem}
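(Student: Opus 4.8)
The plan is to reduce the lower bound to a single uniform estimate---that from \emph{any} permutation the algorithm may currently hold, one iteration creates the global optimum (the identity) with probability at most $C\,n^{-2\lceil m/2\rceil}$---and then to feed this into a standard waiting-time argument. First I would pin down the states the algorithm actually visits. From a current $\sigma$ with $g(\sigma)\le n-m$, i.e.\ $\sigma\in A_2$, every offspring with $g\in[n-m+1..n-1]$ lies in $A_1$ and has fitness at most $m-1<m\le \PJump(\sigma)$, hence is rejected; the only strictly improving offspring is the identity. Thus accepted non-identity moves keep the algorithm inside $A_2$, and since a random permutation has $O(1)$ fixed points with probability $1-o(1)$ (so $\Pr[\sigma_0\in A_2]=1-o(1)$), conditioning on $\sigma_0\in A_2$ the current individual stays in $A_2$ until the identity is first produced.

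The core claim is that there is a constant $C=C(m)$ with $\Pr[\text{one iteration maps }\sigma\text{ to the identity}]\le C\,n^{-2\lceil m/2\rceil}$ for every $\sigma\in A_2$. Let $r$ be the number of cycles of $\sigma$ (counting the fixed points). As $\sigma$ moves $w:=n-g(\sigma)\ge m$ elements in non-trivial cycles of length at least two, it has at most $\lfloor w/2\rfloor$ such cycles, so the least number of transpositions turning $\sigma$ into the identity is $n-r\ge\lceil w/2\rceil\ge\lceil m/2\rceil$. The mutation performs $k\sim\Poi(1)$ random transpositions, each of which changes the number of cycles by exactly $\pm1$---and, by the fact underlying Lemma~\ref{spl}, increases it only when its two elements lie in a common cycle (a ``split''). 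Reaching the identity, which has $n$ cycles, therefore forces at least $n-r$ of the applied transpositions to be splits.

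I would then bound the success probability by summing over $k$ and over the positions of the split steps, controlling each split by Lemma~\ref{spl}. The dominant contribution is from the ``clean'' iterations using exactly $k=n-r$ transpositions, all splits, so that the running number of cycles climbs monotonically from $r$ to $n$; applying Lemma~\ref{spl} to the $j$-th split (when the current permutation has $r+j-1$ cycles) and telescoping gives $\prod_{j=1}^{n-r}\frac{(n-r-j+1)(n-r-j+2)}{n(n-1)}=\frac{(n-r)!\,(n-r+1)!}{(n(n-1))^{\,n-r}}$. A short calculation shows this quantity is decreasing in the deficit $n-r$ over the whole range $\lceil m/2\rceil\le n-r\le n-1$, so it is $O(n^{-2\lceil m/2\rceil})$ uniformly in $\sigma$, which already yields the exponent.

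The main obstacle is to verify that the remaining iterations---those spending $k>n-r$ transpositions, hence performing at least one merge that is later repaired by an extra split---contribute only lower-order terms, and to show this uniformly over all $\sigma\in A_2$, including those whose support $w$ is as large as $n$. Heuristically each superfluous merge/split pair costs a further factor $\Theta(n^{-2})$ from its two transpositions while only inflating the number of admissible orderings polynomially, so the total over $k\sim\Poi(1)$ stays $O(n^{-2\lceil m/2\rceil})$; for large $w$ the required uniformity comes from the super-polynomial smallness of the clean-run product when $n-r$ is large, but turning this bookkeeping into a rigorous $w$-uniform bound is the delicate step. Granting the claim, the theorem follows: conditioned on $\sigma_0\in A_2$ the per-iteration probability of hitting the identity never exceeds $C\,n^{-2\lceil m/2\rceil}$, so the first hitting time $T$ satisfies $\Pr[T>t]\ge(1-o(1))\,(1-C\,n^{-2\lceil m/2\rceil})^{t}$ and hence $E[T]\ge(1-o(1))\,C^{-1}n^{2\lceil m/2\rceil}=\Omega(n^{2\lceil m/2\rceil})$.
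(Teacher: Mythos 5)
Your overall strategy is exactly the paper's: show the process stays in $A_2$ until the identity is produced, prove a per-iteration success bound of $O(n^{-2\lceil m/2\rceil})$ uniform over $A_2$ by counting how many cycle-increasing transpositions are needed and controlling each with Lemma~\ref{spl}, and finish with a geometric waiting-time argument. The telescoping product you compute for the ``clean'' case is literally the product appearing in the paper. However, you explicitly leave the decisive step unproven: you grant, rather than establish, that iterations using $k>n-r$ transpositions (those containing merge/split detours) contribute only lower-order terms, and you flag the $w$-uniformity as ``the delicate step.'' As written, the core claim is therefore not proved, and this is a genuine gap.

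The paper closes it with a device that avoids your merge/split bookkeeping entirely. For a mutation with $k$ transpositions to produce the identity, for each cycle-count level $i\in[n-q+R..n-1]$ there must be some position among the $k$ at which the count is raised from $i$ to $i+1$ (say, for the last time); at that moment the permutation has $i$ cycles, so by Lemma~\ref{spl} the conditional probability that the fresh random transposition performs this split is at most $\frac{(n-i)(n-i+1)}{n(n-1)}$. A union bound over the $\binom{k}{q-R}$ possible sets of positions then gives
\begin{equation*}
\binom{k}{q-R}\prod_{i=n-q+R}^{n-1}\frac{(n-i)(n-i+1)}{n(n-1)}
=\frac{k!\,(q-R+1)!}{(k-q+R)!\,(n(n-1))^{q-R}},
\end{equation*}
and when this is summed against the Poisson weights $\frac{1}{ek!}$ the factor $k!$ cancels and $\sum_{k\ge q-R}\frac{1}{e(k-q+R)!}=1$: the extra transpositions cost nothing beyond the binomial choice of positions, which the Poisson tail absorbs exactly. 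Uniformity over the support size is then elementary, since $q-R\ge\lceil q/2\rceil$ yields a bound of the form $q!/(n(n-1))^{\lceil q/2\rceil}$, which is maximized (up to the constant $(m+1)!$) at $q\in\{m,m+1\}$. If you replace your heuristic paragraph with this last-crossing/union-bound argument, your proof becomes complete and coincides with the paper's.
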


\begin{proof}
We consider first the case that the current permutation is in $A_2$, thus with $q \in [m..n]$ elements out of place. Let us call $R$ the number of cycles of length at least~$2$ in the cycle notation of $\sigma$. Consequently, the total number of cycles  is $n - q + R$. Composing by a transposition increases the number of cycles by at most~1. Thus, in order to reach the global optimum, the sequence of transpositions in a mutation step should at least be composed of $q - R$ transpositions, each raising the number of cycles from $i$ to $i+1$ with $i \in [n-q+R..n-1]$. Thus, with $k$ transpositions applied, an upper bound on the probability of reaching the global optimum from a state of fitness $n - q$ and with $R$ cycles of size $\geq 2$ is given by Lemma \ref{spl} as 
\begin{align*}
\binom{k}{q-R}&\prod_{i=0}^{q-R-1}\frac{(n - (n-q+R+i))(n - (n-q+R+i)+1)}{n(n-1)} \\
&= \frac{k! (q-R+1)!}{(k-q+R)!}\frac{1}{(n(n-1))^{q-R}}.
\end{align*}
Since $ 1 \leq R \leq \lfloor\frac q2\rfloor $, the bound becomes at most 
\[
\frac{k! q!}{(k-q+R)!}\frac{1}{(n(n-1))^{\lceil\frac q2\rceil}}.
\]

Finally, considering the random choice of $k$, we obtain an upper bound on the probability to reach the global optimum in one step from a state of fitness $n - q$ and with $R$ cycles of size $\geq 2$ of
\begin{align*}
\sum_{k = q-R}^\infty \frac{1}{e\cdot k!}\frac{k!\cdot q!}{(k-q+R)!}\frac{1}{(n(n-1))^{\lceil\frac q2\rceil}} &= \displaystyle\frac{ m!\prod_{i=m+1}^{q}i }{(n(n-1))^{\lceil\frac q2\rceil}} &&\\ &\leq \displaystyle \frac{(m+1)!}{(n(n-1))^{\lceil\frac m2\rceil}} := p.
\end{align*}
Hence, considering the fact that the bound above holds for any point in $A_{2}$, the expected time to reach the global optimum from a permutation $\sigma $ in $A_{2}$ is at least $\frac1p$ = $ \Omega (n^{2\lceil\frac m2\rceil}).$

For a random permutation, the expected number of fixed points is $1$. Thus, for $n-m \geq 1$, we estimate with Markov's inequality that having an initial random permutation with at most one fixed point and thus belonging to $A_2$ happens with probability at least $\frac{1}{2}$. Thus, the runtime is also $ \Omega (n^{2\lceil\frac m2\rceil})$ when taking into account the random initial permutation.
\end{proof}

\subsection{Scramble Mutation Operator}

Both the complexity of the proofs above and the slightly obscure result, a runtime of $\Theta(n^{2 \lceil \frac m2 \rceil})$, raise the question whether our permutation-based \oea is optimally designed. The asymmetric behavior of the different local optima suggested to us to look for a mutation operator which treats all these solutions equally. A natural choice, known in the literature on permutation-based EAs~\cite{EibenS15}, is the \emph{scramble mutation} operator, which shuffles a random subset of the ground set $[1..n]$. More precisely, this operator samples a number $k$ according to a Poisson distribution with mean $\lambda=1$, selects a random set of $k$ positions in $[1..n]$, and applies a random permutation $\rho$ to this set (formally speaking, the mutation operator returns $\rho \circ \sigma$, when $\sigma$ was the parent permutation). We note that this operator returns the unchanged parent when $k \in \{0,1\}$. We note further that we allow $\rho$ to have fixed points. Hence the Hamming distance of $\sigma$ and $\rho \circ \sigma$ could be smaller than~$k$. We do not see a problem with his. We note that one could choose $\rho$ as a random fixed-point free permutation to ensure that the Hamming distance is exactly $k$ (at the price that generating a random fixed-point free permutation is more difficult).

For this mutation operator, we shall show a runtime of $\Theta(n^m)$ on $\Pjump_{n,m}$, faster by a factor of $\Theta(n)$ compared to the swap mutation operator when $m$ is odd. Technically much easier, here without any effort we obtain bounds that are tight apart from constant factors even when allowing that $m$ is a function of~$n$.

\begin{theorem}\label{scramb}
 Let $m \geq 3$, possibly depending on $n$. The expected runtime of the permutation-based \oea with the scramble mutation operator on \Pjump$_{n,m}$ is $\Theta((m!)^2 \binom{n}{m})$.
\end{theorem}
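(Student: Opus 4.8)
The plan is to reduce both the upper and the lower bound to a single quantity: the probability that one application of the scramble operator maps the current permutation directly onto $\Id$. If the current $\sigma$ has exactly $q$ non-fixed points, these form a set $T$ with $|T|=q$ on which $\sigma^{-1}$ acts without fixed points. Since $\rho\circ\sigma=\Id$ forces $\rho=\sigma^{-1}$, the scramble succeeds precisely when its selected position set $S$ contains $T$ and the permutation drawn on $S$ (uniform among the $k!$ permutations of $S$, where $k=|S|$) is the unique one agreeing with $\sigma^{-1}$ on $T$ and fixing $S\setminus T$. Averaging over $k\sim\Poi(1)$ gives
\[
p(q)=\sum_{k=q}^{n}\frac{1}{e\,k!}\,\frac{\binom{n-q}{k-q}}{\binom{n}{k}}\,\frac{1}{k!}
=\frac{1}{e\,n^{\underline{q}}}\sum_{j\ge 0}\frac{1}{j!\,(j+q)!},
\]
with $n^{\underline{q}}:=n(n-1)\cdots(n-q+1)=q!\binom{n}{q}$. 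The decisive, and deliberately engineered, feature of the scramble operator is that $p(q)$ depends only on $q$, not on the cycle structure of $\sigma$; all local optima are therefore equally easy to leave, which removes the case analysis that burdened the swap operator.

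The second step estimates $p(q)$ uniformly in $q\ge m$, keeping track of the dependence on $m$ so that $m$ may grow with $n$. Consecutive terms of the series obey $\frac{1/((j+1)!\,(j+1+q)!)}{1/(j!\,(j+q)!)}=\frac{1}{(j+1)(j+q+1)}\le\frac1{q+1}\le\frac14$ for $q\ge 3$, so the series is dominated by its first term $1/q!$ and equals $\Theta(1/q!)$ with constants independent of $q$. Hence $p(q)=\Theta\big(1/((q!)^2\binom{n}{q})\big)$, and since $(q!)^2\binom{n}{q}=q!\,n^{\underline{q}}$ is strictly increasing in $q$, we have $p(q)\le p(m)=\Theta\big(1/((m!)^2\binom{n}{m})\big)$ throughout $A_2$. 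I also record a confinement fact read off the definition of $\PJump_{n,m}$: a state of $A_2$ has fitness $m+g(\sigma)\ge m$, whereas every state of $A_1$ has fitness $n-g<m$, so a move from $A_2$ into $A_1$ strictly lowers the fitness and is rejected; the only accepted way to leave $A_2$ is to reach $\Id$. Consequently, once the algorithm enters $A_2$ it stays there until it jumps to the optimum, and each such step succeeds with probability at most $p(m)$.

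With these facts both bounds are short. For the upper bound I split the run at the first visit to $A_2^+$ (the plateau $q=m$). A fitness-level argument like the one for \LeadingOnes bounds the time to reach it: for any $\sigma$ with $q$ non-fixed points, taking $k=2$, selecting two cyclically adjacent elements of a nontrivial cycle, and drawing their transposition raises $g$, which happens with probability $\Omega(q/n^2)$; inverting and summing over the at most $n$ levels yields $O(n^2\log n)$. On the plateau the waiting time for the jump is geometric with parameter $p(m)$, i.e.\ $O((m!)^2\binom{n}{m})$ iterations, and since $(m!)^2\binom{n}{m}=m!\,n^{\underline{m}}=\Omega(n^3)$ for every $m\ge 3$ this term dominates the slope. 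For the lower bound, Markov's inequality applied to the mean number of fixed points (equal to $1$) places the initial permutation in $A_2$ with probability at least $\tfrac12$; by the confinement fact every step before hitting $\Id$ then lies in $A_2$ and succeeds with probability at most $p(m)$, so a union bound over the first $c\,(m!)^2\binom{n}{m}$ steps, for a suitable small constant $c$, keeps the hitting probability below $\tfrac12$, giving expected runtime $\Omega((m!)^2\binom{n}{m})$.

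The step that demands genuine care is the uniform estimate of $p(q)$: showing that the series $\sum_j 1/(j!\,(j+q)!)$ is controlled by its leading term with constants that do not deteriorate as $m\to\infty$, and checking the monotonicity of $(q!)^2\binom{n}{q}$ so that $p(m)$ is a legitimate uniform cap over all of $A_2$. The confinement of the walk to $A_2$ and the elementary slope estimate are, by contrast, direct analogues of arguments already used for \PLeadingOnes and should go through routinely.
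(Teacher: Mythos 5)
Your proof is correct and follows essentially the same route as the paper: a fitness-level bound of $O(n^2\log n)$ to reach the plateau, the observation that the one-step success probability from a state with $q$ displaced elements is $\sum_{k\ge q}\frac{1}{e\,k!}\frac{\binom{n-q}{k-q}}{\binom{n}{k}}\frac{1}{k!}=\Theta\big(1/((q!)^2\binom{n}{q})\big)$ independently of the cycle structure, and a constant-probability start in $A_2$ via Markov's inequality. The only cosmetic difference is that you evaluate this probability two-sidedly once and reuse it for both bounds, whereas the paper bounds it from below (via the $k=m$ event) for the upper bound and from above (via the superset condition) for the lower bound.
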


\begin{proof}
For the upper bound, and adopting previously introduced notations, the expected time spent in $A_1$ and $A_2 \setminus A_2^+$ can again easily be bounded by $O(n^2 \log n)$ via elementary fitness level arguments. We note that both the swap and the scramble mutation operator apply a particular transposition with probability $\Theta(n^{-2})$ and such mutation steps suffice to make progress in $A_1$ and $A_2 \setminus A_2^+$.

Once the current permutation is in $A_2^+$, a mutation step which leads to the global optimum can be one operating exactly on the $m$ displaced elements and bringing them into place. Such an event occurs with probability exactly $\frac{1}{em!} \binom{n}{m}^{-1} m!^{-1}$. Thus, the expected waiting time for such an event is at most $e (m!)^2 \binom{n}{m}$.

We omit the proof of the lower bound for reasons of space.
\end{proof}

\section{Heavy-tailed Mutation Operators}

A precise runtime analysis of the classic \oea on the bit-string \jump benchmark~\cite{DoerrLMN17} has shown (i)~that the classic mutation rate of $\frac 1n$ is far from optimal for this benchmark, (ii)~that the optimal mutation rate asymptotically is equal to $\frac mn$, and (iii)~that a heavy-tailed mutation operator gives a performance very close to the optimal mutation rate, but without the need to know the gap parameter~$m$. The main reason for the speed-ups was that these mutation operators simply more often try to flip $m$ bits, which is necessary to leave the local optimum. Since choosing $k=m$ is also the best (though not the only) way to leave a local optimum of the permutation-based \jump function to a better solution, it is natural to expect similar speed-ups also with a heavy-tailed choice of $k$, and this is the main result of this section. Without going into details, we note that also choosing $k$ from a Poisson distribution with expectation $m$ would be a good choice, however, again one that works well only for jump functions with (essentially) this gap parameter. For that reason, we find the analysis of heavy-tailed mutation operators for permutation problems more interesting and do not follow the $k \sim \Poi(m)$ idea further (we note cursory that heavy-tailed parameter choices found ample uses in the recent time and often overcame in an elegant manner the problem to set one or more parameters of an evolutionary algorithm \cite{FriedrichQW18,FriedrichGQW18,QuinzanGWF21,WuQT18,AntipovBD20gecco,AntipovBD20ppsn,AntipovD20ppsn,AntipovBD21gecco,DoerrZ21aaai,CorusOY21,DoerrR22}). Since our analyses above suggest that the scramble mutation operator is more natural than the one based on swaps, we shall only regard a heavy-tailed version of the former. 

So we proceed by defining a {heavy-tailed scramble mutation} operator. We say that an integer random variable $X$ follows a \emph{power-law distribution} with parameters $\beta$ and $u$ if
\[
\Pr[X=i]=\left\{\begin{array}{l}C_{\beta,u}i^{-\beta}\;\;\;\; \text{ if }i \in [1..u],\\
                                0\;\;\;\;\text{ otherwise,}
\end{array}\right.
\]
where $C_{\beta,u} = (\sum_{k = 1}^u k^{-\beta})^{-1}$ denotes the normalization coefficient. We write $X \sim \pow(\beta, u)$
and call $u$ the range of $X$ and $\beta$ the power-law exponent.

Now we call \emph{heavy-tailed scramble mutation} (with power-law exponent $\beta$) the mutation operator that first samples a number $k \sim \pow(\beta,n)$, then selects a random subset of $k$ elements from $[1..n]$, and finally applies a random permutation on this set. Hence this operator is identical to the previously regarded scramble operator apart from the random choice of $k$, which now follows a power-law distribution instead of a Poisson distribution. 

We show the following runtime result, which implies a speed-up compared to the classic scramble operator by a factor of $m^{\Theta(m)}$.

\begin{theorem}\label{thm:scrambHT}
 Let $m \geq 3$, possibly depending on $n$. The expected runtime of the permutation-based \oea with heavy-tailed  scramble mutation on \Pjump$_{n,m}$ is $\Theta(m^{\beta} m! \binom{n}{m})$.
\end{theorem}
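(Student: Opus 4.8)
The plan is to follow the structure of the proof of Theorem~\ref{scramb}, since the heavy-tailed scramble operator differs from the Poisson scramble operator only in the law of the sampled number $k$. Throughout I treat $\beta > 1$ as a constant, so that the normalisation $C_{\beta,n} = (\sum_{k=1}^n k^{-\beta})^{-1} \in [\zeta(\beta)^{-1}, 1] = \Theta(1)$; in particular $\Pr[k=m] = C_{\beta,n} m^{-\beta} = \Theta(m^{-\beta})$, which replaces the Poisson value $\frac{1}{em!}$ used in Theorem~\ref{scramb}. As there, the bottleneck is the escape from the set $A_2^+$ of local optima, and the key structural fact is that for the scramble operator \emph{every} state of $A_2^+$ has exactly the same probability of jumping to the global optimum (select the $m$ displaced positions, then apply the unique correcting permutation), so that, unlike for the swap operator, the cycle structure of the current local optimum is irrelevant.

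For the upper bound I would first bound the time to reach a local optimum. Since $\Pr[k=2] = C_{\beta,n} 2^{-\beta} = \Theta(1)$, any prescribed transposition is applied with probability $\Theta(n^{-2})$, exactly as for the Poisson scramble operator, so the elementary fitness-level argument used in Theorem~\ref{scramb} still gives an $O(n^2 \log n)$ bound on the time to reach $A_2^+$; this is $o(m^{\beta} m! \binom{n}{m})$ because $m! \binom{n}{m} = n(n-1)\cdots(n-m+1) = \Omega(n^3)$ for $m \ge 3$. Once in $A_2^+$ the fitness equals $n$, and the only accepted non-optimal moves keep $g(\sigma) = n-m$, so the algorithm stays in $A_2^+$ until it jumps. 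From any such state the $k=m$ route (selecting exactly the $m$ displaced positions and applying the unique correcting permutation) reaches the optimum with probability $C_{\beta,n} m^{-\beta} \binom{n}{m}^{-1} (m!)^{-1}$, so the geometric waiting time has mean $O(m^{\beta} m! \binom{n}{m})$, which dominates.

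For the lower bound I would start, as in the proof of Theorem~\ref{low}, from the fact that the initial permutation lies in $A_2$ with probability at least $\frac 12$ (Markov's inequality on the number of fixed points, whose expectation is $1$, valid once $n \ge m+1$), and that from $A_2$ the algorithm never descends into $A_1$, hence remains in $A_2$ until it jumps to the optimum. The heart of the argument is a uniform upper bound on the one-step probability of reaching the optimum from a state with $q \ge m$ displaced elements. Such a step forces the selected $k$-set to contain all $q$ displaced positions and then determines the applied permutation uniquely; using the identity $\binom{n-q}{k-q}\big/\bigl(\binom{n}{k}\,k!\bigr) = \bigl((k-q)!\,q!\binom{n}{q}\bigr)^{-1}$ and summing over $k \ge q$, the jump probability is at most
\[
\frac{C_{\beta,n}}{q!\binom{n}{q}} \sum_{k=q}^{n} \frac{k^{-\beta}}{(k-q)!} \;\le\; \frac{C_{\beta,n}\,e\,q^{-\beta}}{q!\binom{n}{q}},
\]
where I bounded $k^{-\beta} \le q^{-\beta}$ and $\sum_{j\ge 0} \frac{1}{j!} = e$. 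Since $q!\binom{n}{q} = n!/(n-q)!$ is increasing in $q$ and $q^{-\beta}$ is decreasing, this bound is decreasing in $q$ and thus maximised at $q=m$, giving a per-step jump probability of $O(m^{-\beta}(m!)^{-1}\binom{n}{m}^{-1})$ and hence an expected time of $\Omega(m^{\beta} m! \binom{n}{m})$.

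I expect the main obstacle to be the lower bound, specifically the clean evaluation of the total one-step jump probability: identifying that the correcting permutation is uniquely determined on the selected set, deriving the telescoping identity above, and then observing monotonicity in $q$ to reduce to the worst case $q=m$. A secondary point requiring care is the regime of $\beta$: the claimed $\Theta(m^{\beta} m! \binom{n}{m})$ (with the implied constant allowed to depend on $\beta$) relies on $C_{\beta,n} = \Theta(1)$, which holds precisely for constant $\beta > 1$; for $\beta \le 1$ the normalisation would contribute an extra $n$-dependent factor and the statement would need adjustment.
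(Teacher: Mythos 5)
Your proposal is correct and follows exactly the route the paper intends: it is the proof of Theorem~\ref{scramb} (including the appendix lower bound, whose telescoping identity and monotonicity-in-$q$ step you reproduce faithfully) with the Poisson probability $\frac{1}{em!}$ of sampling $k=m$ replaced by the power-law probability $C_{\beta,n}m^{-\beta}=\Theta(m^{-\beta})$, which is precisely the substitution the paper's proof sketch describes. Your added remark that $C_{\beta,n}=\Theta(1)$ requires a constant $\beta>1$ is a valid clarification of an assumption the paper leaves implicit.
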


We omit the proof, which is very similar to the one for the regular scramble mutation operator. Both operators have a constant probability of applying a random transposition (this solves the easy fitness-level part of the analysis), but the heavy-tailed version has a much higher (by a factor of $\Omega((m!)/m^\beta)$) probability of scrambling $m$ elements, which gives the speed-up in the leading term. We note that this argument even applies to the lower-bound proof.

\section{Conclusions}

We designed a simple and general way to transfer the classic benchmarks from pseudo-Boolean optimization into permutation-based benchmarks. Our hope and long-term goal is that the theory of permutation-based EAs can profit from these in a similar manner as the classic EA theory has profited from benchmarks for bit-string representations.

As a first small step in this direction, we conduct a mathematical runtime analysis on the permutation-based \jump function class. Both from the resulting runtime and the difficulties in the proof, we deduce that the previously commonly used mutation operator of applying a random number of transpositions possibly has some drawbacks not detected so far. We overcome these difficulties by switching to the scramble mutation operator, which both leads to better runtimes and to more natural proofs. We also observe that heavy-tailed mutation strengths, proposed a few years ago for the bit-string representation, are profitable in permutation-based EAs as well. 

Given the large number of recent interesting theoretical results on EAs using a bit-string representation, we can only hope that our definition of permutation-based benchmarks paves the way for some of these to enter the world of permutation-based EAs.

\begin{acks}
This work was supported by a public grant as part of the
Investissements d'avenir project, reference ANR-11-LABX-0056-LMH,
LabEx LMH.
\end{acks}


\bibliographystyle{ACM-Reference-Format}
\bibliography{alles_ea_master,ich_master}


\clearpage
\appendix

\section{Omitted Proofs}

\begin{proof}[Proof of Lemma~\ref{Dec}]
For the number of cycles to change, by applying $k$ transpositions, at least $2$ elements among the $2k$ elements which appear in the $k$ transpositions have to be among the $m$ deranged ones. Otherwise, applying these $k$ transpositions would either lead to a permutation of inferior fitness or the cycle structure would stay the same. Hence, an iteration modifies the number of cycles with probability at most
\begin{align*} 
& \sum_{t = 0}^\infty\frac{1}{e \cdot k!}\binom{2k}{2}\left(\frac{m}{n-1}\right)^2 \le 3\left(\displaystyle\frac{m}{n-1}\right)^2,
\end{align*}
where the sum was estimated already in the proof of Lemma~\ref{LeaInc}.
\end{proof}

\begin{proof}[Proof of Lemma~\ref{spl}]
Denoting by $n_{1},\dots,n_{r}$ the lengths of the different cycles, the exact probability for this event is $p = \frac{\sum_{i = 1}^r n_{i}(n_{i}-1)}{n(n-1)}$. Hence it suffices to show that $\sum_{i = 1}^r n_{i}(n_{i}-1) \le (n-r)(n-r+1)$. To this aim, note that $f : \R^r \to \R; (n_1, \dots, n_r) \mapsto \sum_{i = 1}^r n_{i}(n_{i}-1)$ is convex. Let $e_i$ be the $i$-th unit vector of the standard basis and $\bold{1} = \sum_{i=1}^r e_r$. Then $(n_1, \dots, n_r) = \sum_{i = 1}^r \frac{n_{i}-1}{n-r}(\bold{1} + (n-r)e_i)$. By the convexity of~$f$, we have
\begin{align*}
f(n_1, \dots, n_r) & = f\left(\sum_{i = 1}^r \frac{n_{i}-1}{n-r}(\bold{1} + (n-r)e_i)\right)\\
&\le \sum_{i = 1}^r \frac{n_{i}-1}{n-r}f(1,\dots,1,n-r+1,1,\dots,1)\\
&\le (n-r)(n-r+1) \sum_{i = 1}^r \frac{n_{i}-1}{n-r}\\
&= (n-r)(n-r+1).\qedhere
\end{align*}
\end{proof}

\begin{proof}[Proof of the lower bound in Theorem~\ref{scramb}]
For the lower bound, let us suppose again that we start in $A_2$, as this happens with constant probability. Reaching the global optimum from any point of $A_2$ with $q \in [m..n]$ elements out of place demands a mutation step operating on a set containing at least these $q$ elements. Thus, a natural upper bound for the probability of this event is 
\begin{align*}
\sum_{k = q}^n & \frac{1}{e k!} \frac{\binom{n-q}{k-q}}{\binom{n}{k}} \frac{1}{k!}\\
 &= \frac{(n-q)!}{n!}\sum_{k = q}^n \frac{1}{e k!(k-q)!}\\
 & \leq \frac{(n-q)!}{n!}\frac{1}{q!} \sum_{k = q}^n \frac{1}{e(k-q)!}\\
 & \leq  \frac{(n-m)!}{n!} \frac{1}{m!}
  =  \frac{1}{m!\binom{n}{m}} \frac{1}{m!}
\end{align*}
Since this bound holds for any permutation $\sigma$ in $A_2$, the expected time until reaching the global optimum from a permutation $\sigma $ in $A_{2}$ is $\Omega ((m!)^2 \binom{n}{m})$. Then, using the argument that the initial permutation is in $A_2$ with constant probability, the unconditional expected runtime is also $\Omega ((m!)^2 \binom{n}{m})$.
\end{proof}
} 
\end{document}